\newtheorem{theorem}{Theorem}
\newcommand{\ba}        {{\bf a}}
\newcommand{\bb}        {{\bf b}}
\newcommand{\be}        {{\bf e}}
\newcommand{\bE}        {{\bf E}}
\newcommand{\bX}        {{\bf X}}
\newcommand{\bx}        {{\bf x}}
\newcommand{\bh}        {{\bf h}}
\newcommand{\bW}        {{\bf W}}
\newcommand{\bw}        {{\bf w}}
\newcommand{\bmm}        {{\bf m}}
\newcommand{\bk}        {{\bf k}}
\newcommand{\bI}        {{\bf I}}
\newcommand{\bq}        {{\bf q}}
\def\tsc#1{\csdef{#1}{\textsc{\lowercase{#1}}\xspace}}
\begin{document}
\let\WriteBookmarks\relax
\def\floatpagepagefraction{1}
\def\textpagefraction{.001}

\shorttitle{An Interpretable Graph Attentive Recurrent Neural Network for Predicting Blood Glucose Levels}    

\shortauthors{Chengzhe Piao {\em et al. }}  

\title [mode = title]{GARNN: An Interpretable Graph Attentive Recurrent Neural Network for Predicting Blood Glucose Levels via Multivariate Time Series}  



%

\author[1]{Chengzhe Piao}[orcid=0000-0003-0494-5098]
\ead{chengzhe.piao.21@ucl.ac.uk}
\credit{Conceptualization, Methodology, Software, Investigation, Writing – original draft}
\author[2]{Taiyu Zhu}
\ead{taiyu.zhu@psych.ox.ac.uk}
\credit{Data curation,  Validation, Formal analysis, Writing – review \& editing}
\author[3,4]{Stephanie E Baldeweg}
\ead{s.baldeweg@ucl.ac.uk}
\credit{Conceptualization, Writing – review \& editing}
\author[1]{Paul Taylor}
\ead{p.taylor@ucl.ac.uk}
\credit{Resources, Writing – review \& editing}
\author[5]{Pantelis Georgiou}
\ead{pantelis@imperial.ac.uk}
\credit{Data curation, Resources, Writing – review \& editing}
\author[6]{Jiahao Sun}
\ead{sun@flock.io}
\credit{Conceptualization, Writing – review \& editing}
\author[7]{Jun Wang}
\ead{junwang@cs.ucl.ac.uk}
\credit{Conceptualization, Writing – review \& editing, Supervision}
\author[1]{Kezhi Li}
\cormark[1]
\ead{ken.li@ucl.ac.uk}
\credit{Conceptualization, Writing – review \& editing, Project administration, Supervision, Resources, Funding acquisition}   
         
\affiliation[1]{organization={Institute of Health Informatics, University College London},
                city={London},
                postcode={NW1 2DA}, 
                country={UK}}
\affiliation[2]{organization={Department of Psychiatry, University of Oxford},
                city={Oxford},
                postcode={OX3 7JX}, 
                country={UK}}
\affiliation[3]{organization={Department of Diabetes \& Endocrinology, University College London Hospitals},
            city={London},
            postcode={NW1 2PG}, 
            country={UK}}
\affiliation[4]{organization={Centre for Obesity \& Metabolism, Dept of Experimental \& Translational Medicine, University College London},
            city={London},
            postcode={WC1E 6JF}, 
            country={UK}}
\affiliation[5]{organization={Centre for Bio-Inspired Technology, Department of Electrical and Electronic Engineering, Imperial College London},
            city={London},
            postcode={SW7 2AZ}, 
            country={UK}}

\affiliation[6]{organization={FLock.io},
            city={London},
            postcode={WC2H 9JQ}, 
            country={UK}}
\affiliation[7]{organization={Department of Computer Science, University College London},
            city={London},
            postcode={WC1E 6EA}, 
            country={UK}}












\cortext[1]{Corresponding author}



\begin{abstract}
Accurate prediction of future blood glucose (BG) levels can effectively improve BG management for people living with diabetes, thereby reducing complications and improving quality of life.
The state of the art of BG prediction has been achieved by leveraging advanced deep learning methods to model multi-modal data, i.e., sensor data and self-reported event data, organised as multi-variate time series (MTS). 
However, these methods are mostly regarded as ``black boxes'' and not entirely trusted by clinicians and patients.
In this paper, we propose interpretable graph attentive recurrent neural networks (GARNNs) to model MTS, explaining variable contributions via summarizing variable importance and generating feature maps by graph attention mechanisms instead of post-hoc analysis. 
We evaluate GARNNs on four dataset, representing diverse clinical scenarios.
Upon comparison with twelve well-established baseline methods, GARNNs not only achieve the best prediction accuracy {\color{black} but also provide high-quality temporal interpretability, in particular for postprandial glucose levels as a result of corresponding meal intake and insulin injection}. These findings underline the potential of GARNN as a robust tool for improving diabetes care, bridging the gap between deep learning technology and real-world healthcare solutions.
\end{abstract}


\begin{highlights}
\item We propose GARNNs to forecast blood glucose levels, providing prediction explanations by learning temporal variable importance.
\item GARNNs are based on graph attention to assess and prioritize temporal variables for the forecasting.
\item GARNNs offer insightful feature maps for the prediction, supported by theoretical proofs.
\item GARNNs are rigorously tested against twelve baselines, proving superior accuracy and interpretability for the prediction.
\item GARNNs' effectiveness across four datasets is demonstrated by their robustness for sparse input signals, such as insulin and meal intake.
\end{highlights}

\begin{keywords}
Blood glucose prediction \sep Interpretable \sep Graph \sep Recurrent neural networks \sep Diabetes
\end{keywords}

\maketitle









\section{Introduction}
Diabetes is directly responsible for over a million deaths worldwide every year \cite{diabetes}
due to complications arising from type 1 diabetes mellitus (T1DM) and type 2 diabetes mellitus (T2DM). The autoimmune reaction of people with T1DM destroys the cells in the pancreas which produce endogenous insulin, while people with T2DM predominantly have insulin resistance, which inhibits their ability to utilize insulin effectively.
Difficulties to manage BG levels by endogenous insulin leads to hypoglycemia and hyperglycemia, causing serious health problems \cite{diacare, mora2023predicting}.
Hence, effective self-management for BG levels is the key to the treatment \cite{DBLP:journals/artmed/WoldaregayAWAMB19}, because increased ``Time in Range'' has been shown to reduce the likelihood of complications \cite{bezerra2023time}.

Continuous glucose monitoring (CGM) systems offer the ability to track BG levels every few minutes, generating continuous BG trajectories. 
BG level prediction (BGLP) based on CGM data \cite{DBLP:conf/aaai/PlisBMSS14,DBLP:journals/cbm/CichoszKJH21} allows people with diabetes to avoid hypo- and hyperglycemia by taking precautions in real-time. 
Recent work \cite{DBLP:journals/cbm/NematKEB23, 9681840,DBLP:journals/tbe/ZhuLHG23, DBLP:journals/npjdm/ZhuULHOG22,  DBLP:journals/cbm/KarimVK20} leverages 
multi-modal data by organizing it as MTS in BGLP.
In this case, apart from CGM data, the MTS input also
includes sensor data, e.g., heart rate, and self-reported events, e.g., the amount of carbohydrate intake and bolus insulin injection.
While these methods have the potential to further improve BGLP by levaraging the rich hidden information of MTS, the lack of interpretability makes them less trustworthy.
It is vital to understand how each variable contributes towards prediction rather than solely improving prediction accuracy. 

However, 
%
post-hoc analysis methods, e.g., gradient-based attribution
methods \cite{ancona2018towards}, are computationally inefficient \cite{DBLP:conf/icml/0002LA19} and difficult to be used by the researchers and clinicians without machine learning knowledge. 
%
{\color{black} Shapley Additive exPlanations (SHAP) values, as discussed by \citeauthor{{DBLP:conf/nips/LundbergL17}} in  \cite{DBLP:conf/nips/LundbergL17}, are utilized alongside Long Short-Term Memory (LSTM, \cite{DBLP:journals/neco/HochreiterS97}) models in BGLP \cite{DBLP:conf/ecai/CapponMPPSFF20, prendin2023importance}. It should be noted that while they offer significant insights into the importance of a limited number of variables, their scope in temporal variable importance is somewhat restricted.}

\begin{figure*}[tb]
	\centering
	\includegraphics[width = 2.0\columnwidth]{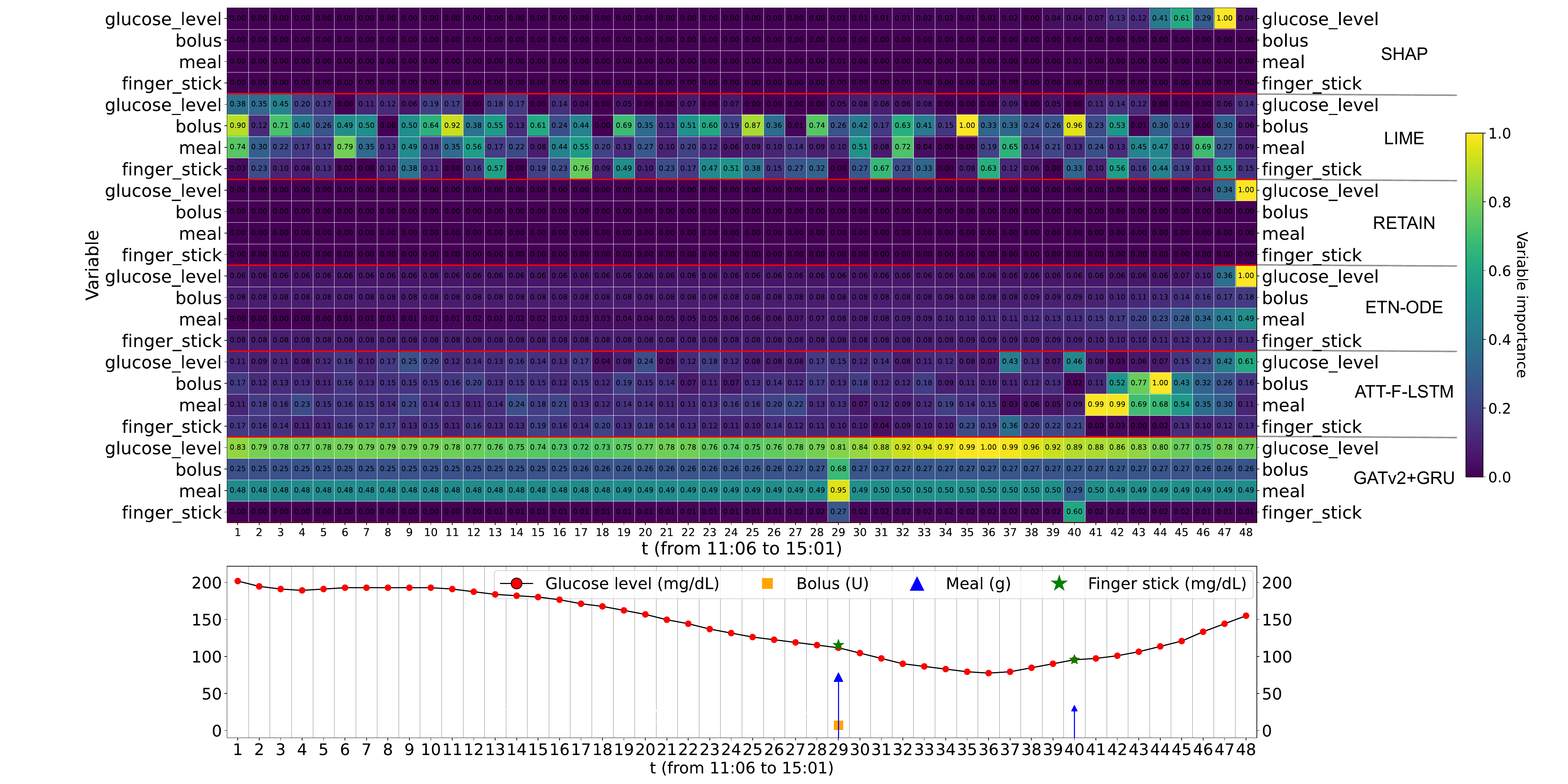}
	\caption{Some feature maps of participant 1005 in ArisesT1DM. The bottom sub-figure is the visualization of a historical multivariate time serires, only showing ``glucose\_level (continuous glucose monitoring)'', ``bolus'', ``meal'' and ``finger\_stick (capillary blood glucose test)''. The heat maps on the top are the feature maps of different methods. The x-axis/y-axis of heatmaps is the timestep/variable. The value in the cell is the variable importance, scaled to $[0,1]$. {\color{black} Abbr.: SHAP (SHapley Additive exPlanations, \cite{DBLP:conf/nips/LundbergL17}), LIME (Local Interpretable Model-agnostic Explanations, \cite{DBLP:conf/kdd/Ribeiro0G16}), RETAIN (REverse Time AttentIoN, \cite{DBLP:conf/nips/ChoiBSKSS16}) , ETN-ODE (Explainable Tensorized Neural Ordinary Differential Equations, \cite{9757812}), ATT-F-LSTM (ATTention of Feature before Long Short-Term Memory \cite{gandin2021interpretability}) and our proposed method ``GATv2+GRU'' (Graph Attention NeTworks version 2 \cite{DBLP:conf/iclr/Brody0Y22} and Gated Recurrent Unit \cite{DBLP:conf/ssst/ChoMBB14}).}}
	\label{fig:244}
\end{figure*}

Comparably, attention-based recurrent neural networks (RNNs), e.g., RETAIN \cite{DBLP:conf/nips/ChoiBSKSS16}, ETN-ODE \cite{9757812} and ATT-F-LSTM \cite{gandin2021interpretability}, can inherently learn variable importance by the attention mechanisms during training.
Nevertheless, the feature maps built on the variable importance are unhelpful to understand BGLP.
Hence, we aim to propose a novel attention-based interpretable model that can rank variables in accordance with domain knowledge and generate understandable feature maps. 
{\color{black}
For instance, figure \ref{fig:244} demonstrates the comparison of existing interpretable methods in interpreting the time-dependent importance of self-reported events for BG trajectories.}
%
%
{\color{black}
Other interpretable methods fail to present their correct focus on self-reported events, as shown in their feature maps.
On the contrary}, our proposed method ``GATv2+GRU'' provides insights into the importance of variables when valid observations are available for those variables. For instance, in our feature map, the importance of ``bolus'' suddenly increases {\color{black} from $0.27$} to $0.68$ at $t=29$, because this participant administrated bolus insulin at this timestep.
%
{\color{black}
When data is incomplete or invalid, like when missing points are filled with average values, the importance of variables tends to become stable at certain numbers. In such cases ($t \neq 29$), the importance of ``bolus'' typically remains near $0.26$, reflecting an average importance due to this data padding.}
Besides, it can precisely capture the local maxima and minima. When $t=36$, the ``glucose\_level'' achieves its lowest local minima, and its importance increases to $1.0$. 

{\color{black} When compared with both non-interpretable and interpretable approaches, our proposed methods outperform others in BGLP. Additionally, they offer effective explanations for MTS by inherently providing detailed insights. To summarize, the contributions of this paper are as follows:}

\begin{itemize}
    \item We propose Graph Attentive Recurrent Neural Networks, denoted as GARNNs, combining Graph Attention neTworks (GAT \cite{DBLP:conf/iclr/VelickovicCCRLB18} or GATv2 \cite{DBLP:conf/iclr/Brody0Y22}), with RNNs. {\color{black} We leverage GAT/GATv2 to explicitly model correlations among various variables, resulting in the inherent learning of temporal variable importance. Subsequently, RNNs are employed to aggregate temporal features for the prediction of future BG levels.}
    \item We propose a novel temporal variable importance for MTS based on graph attention mechanisms. It can respectively summarize and generate significant variable importance ranking and feature maps in BGLP. {\color{black} We employ three theorems and their corresponding proofs to demonstrate the creation, characteristics, and influence of temporal variable importance.}
    \item {\color{black} 
Our methods, along with twelve baseline approaches, are assessed using four datasets.
This evaluation focuses on the accuracy of predictions and the significance of variable ranking and feature maps. Not only do our methods exhibit outstanding performance, but they also attribute medically justifiable significance to crucial variables, including those with sparse signals like insulin injections and meal intake.}
\end{itemize}

\section{Related Work}
Interpretable methods can be generally divided into post-hoc analysis methods and attention-based methods.

If a model itself lacks interpretability, it is still possible to derive explanations via post-hoc analysis methods, e.g., gradient-based analyzing \cite{DBLP:conf/icml/ShrikumarGK17, DBLP:conf/aaai/BykovHNH22}, variable contribution to the temporal shift of model \cite{tonekaboni2020went}, Local Interpretable Model-agnostic Explanations (LIME, \cite{DBLP:conf/kdd/Ribeiro0G16}). SHapley Additive exPlanations (SHAP, \cite{DBLP:conf/nips/LundbergL17}), shapley based feature attributions \cite{kwon2022weightedshap} and local rule based explanation \cite{DBLP:conf/aaai/RajapakshaB22}.

However, considering the efficiency and the target to explain the modeling of each MTS, the above methods are not considered in BGLP.
Instead, we focus on attention-based methods, where variable importance based on attention is inherently learned during training.

Tensorized RNN methods \cite{DBLP:conf/icml/0002LA19,DBLP:conf/icdm/ChuWMJZY20, DBLP:journals/titb/ShamoutZSWC20,DBLP:conf/icml/AguiarSW022, 9757812} leveraged parallel RNNs to model MTS, where each time series was modeled by a certain RNN.
Then, they used attention to aggregate the outputs of parallel RNNs.
In this regard, IMV-TENSOR \cite{DBLP:conf/icml/0002LA19} leveraged variable-wised temporal attention and variable attention to summarize all the outputs of parallel LSTMs, where the attention was directly mapped from the outputs of LSTMs by fully-connected layers.
Similarly, instead of using parallel RNNs, \citeauthor{DBLP:conf/wsdm/HsiehWSH21} utilized parallel 1-dimensional convolution neural networks for modeling MTS aided by attention \cite{DBLP:conf/wsdm/HsiehWSH21}.

These parallel structures explicitly split the contribution of variables in the prediction, but they ignored the interaction among the variables.
Besides, extracting variable importance from the temporal attention which worked on the outputs of RNNs could cause temporal biases.
For example, as shown in Figure \ref{fig:244}, we cannot infer the variable importance of ``meal'' from the feature map of ETN-ODE, when $t=29$.
This is because some important content at $t=29$ is passed to the last several timesteps by RNNs, and feature maps are built on the outputs of RNNs.
{\color{black}
While RETAIN \cite{DBLP:conf/nips/ChoiBSKSS16} and RAIM \cite{DBLP:conf/kdd/XuBDMS18} did not incorporate parallel structures and utilized the attention mechanism to combine embeddings of MTS instead of RNN outputs, their attention process was still influenced by the outputs from RNNs.
For instance, RETAIN employed two separate RNNs to create attention at both the visit and variable levels. This RNN-derived attention was then applied to aggregate the MTS embeddings. Consequently, in RETAIN, the significance assigned to variables was somewhat biased by the RNNs, as illustrated in Figure \ref{fig:244}.}

{\color{black} Even though certain researchers had attempted to minimize the influence of RNNs on the importance of temporal variables, the effectiveness of this importance remained uncertain. \citeauthor{kaji2019attention} and \citeauthor{gandin2021interpretability} had implemented attention mechanisms prior to the RNN layers, ensuring that the variable importance was not skewed by the RNNs. However, this approach, which was directed either by variable-wise temporal attention \cite{kaji2019attention} or time-wise variable attention \cite{gandin2021interpretability}, still led their models to erroneously focus on some irrelevant data points, particularly in the context of sparse signals. This issue is evident in models like ATT-F-LSTM \cite{gandin2021interpretability}, as showcased in Figure \ref{fig:244}. }

\section{Preliminaries}

Given a directed graph $\mathcal{G} \triangleq ({\mathcal{N},\mathcal{E}}, \bE)$, we assume that the graph $\mathcal{G}$ has $N$ nodes, denoted as $\mathcal{N}\triangleq\{1, ..., n, ..., N\}$.
Each node $n$ is connected with its neighborhood $\mathcal{N}^n\subseteq \mathcal{N}$, through the edges of $\mathcal{E}$.
Besides, the presentations of all the nodes are $\bE=[\be^1\ ...\ \be^n\ ... \ \be^N]\in\mathbb{R}^{E\times N}$, where the presentation of the node $n$ is $\be^n$.
Assuming that there are multiple layers in  Graph Attention neTworks (GATs), defined as $\mathcal{L}\triangleq\{1, ..., l, ..., L\}$, each node $n$ in layer $l$ can receive and aggregate neural messages from $\mathcal{N}^n$.
Then, one of the aggregation approaches is:
\begin{align}
    \label{Eqn:aggre}
    &\be^{n,l+1} = \sum_{j\in\mathcal{N}^n}\Tilde{s}^{n,j, l}(\bW^l\be^{j,l}+ \bb^l),\\
    \label{Eqn:att}
    &\Tilde{s}^{n,j,l} = {\rm Softmax} (s^{n,j,l}),
\end{align}
where the neural message from node $j\in\mathcal{N}^n$ is $\bW^l\be^{j,l}+ \bb^l$; learnable parameters are $\bW^l$ and $\bb^l$; the attention weight from $j$ to $n$ is $\Tilde{s}^{n,j, l}\in\mathbb{R}^1$.

We temporarily omit the superscript $l$ for simplicity. 
The attention weight $\Tilde{s}^{n,j}$ is gotten by normalizing the score $s^{n,j}$.
The score from node $j$ to node $n$ is $s^{n,j}$, as:
\begin{align}
    \label{Eqn:gat}
    &{\rm GAT}:\ s^{n,j} = {\rm LeakyReLU}
    (\ba^{\prime \top}[\bW\be^n+\bb;\bW\be^j+\bb]),\\
    \label{Eqn:gatv2}
    &{\rm GATv2}: \ s^{n,j} = \ba^{ \top} {\rm LeakyReLU}(\bW^\prime[\be^n;\be^j] + \bb),
\end{align}
where $\ba\in\mathbb{R}^{A}$ and $\ba^\prime \in\mathbb{R}^{2A}$ are learnable parameters; {\color{black} constant $A$ is a hyperparameter}; the concatenation of vectors is $[;]$.

The above score function can be rewritten as:
\begin{align}
    \label{Eqn:gat_ext}
    &{\rm GAT}:\ s^{n,j} = {\rm f} (\bq^n, \bk^j)= {\rm LeakyReLU}\left(\ba_1^\top\bq^n+\ba_2^\top\bk^j\right),\\
    \label{Eqn:gatv2_ext}
    &{\rm GATv2}:\ s^{n,j} = {\rm f} (\bq^n, \bk^j) = \ba^\top {\rm LeakyReLU}(\bq^n+ \bk^j),
\end{align}
where $\ba_1,\ba_2\in\mathbb{R}^{A}$. The query $\bq^n$ is $\bW_1\be^n+\bb_1$, and the key $\bk^j$ is $\bW_2\be^j + \bb_2$.
As for Equation (\ref{Eqn:gat_ext}), the learnable parameters $\{\bW_1, \bb_1\}$ should be the same as $\{\bW_2,\bb_2\}$.

{\color{black} 
Based on the findings presented in \citep{DBLP:conf/iclr/Brody0Y22}, the scoring mechanism in GAT and GATv2 differs, with GAT having a static approach and GATv2 featuring a dynamic one.
As a result, GATv2 demonstrates greater expressiveness in modeling intricate features of graphs, offering enhanced capabilities compared to its predecessor.}

\textbf{Static scoring}: 
given queries $\{\bq^n| n\in\mathcal{N}\}$ and keys $\{\bk^m|m\in\mathcal{M}\}$, it holds ${\rm f}(\bq^n, \bk^m) \leq {\rm f}(\bq^n, \bk^{m^\prime=m^{max}})$, where $\forall n \in \mathcal{N}$,   $\forall m \in \mathcal{M}$, $\exists m^\prime \in \mathcal{M}$, ${\rm f}: \mathbb{R}^A\times\mathbb{R}^A\rightarrow\mathbb{R}^1$.

\textbf{Dynamic scoring}: given queries $\{\bq^n| n\in\mathcal{N}\}$ and keys $\{\bk^m|m\in\mathcal{M}\}$, it holds ${\rm f}(\bq^n, \bk^m) < {\rm f}(\bq^n, \bk^{m^\prime=\phi(n)})$, where $\forall n \in \mathcal{N}$, $\exists m^\prime \in \mathcal{M}$, $\forall m \in \mathcal{M}$ and $m\neq\phi(n)$. Meanwhile, function $\phi: \mathcal{N}\rightarrow\mathcal{M}$ and ${\rm f}: \mathbb{R}^A\times\mathbb{R}^A\rightarrow\mathbb{R}^1$.

\section{Proposed model}
\subsection{Problem definition}
\textbf{Blood glucose level prediction based on multivariate time series (BGLP-MTS)}: given the values of variables $\mathcal{N}$ from historical timesteps $\mathcal{T}\triangleq\{1, ..., t, ..., T\}$, i.e., $\bX=[\bx_1 ...\ \bx_t ...\ \bx_T ]\in \mathbb{R}^{N\times T}$, predict the BG level $y_{T+H}$. The vector $\bx_t = [x_t^1\ ...\ x_t^n\ ...\ x_t^N]^\top$, and $H$ is a prediction horizon. 
We let $n=1$ be the target variable, i.e., $x^1_t = y_t$.
The rest variables ($n>1$) are exogenous variables when $N>1$.

\subsection{Overview}

\begin{figure}[tb]
	\centering
	\includegraphics[width = 1.0\columnwidth]{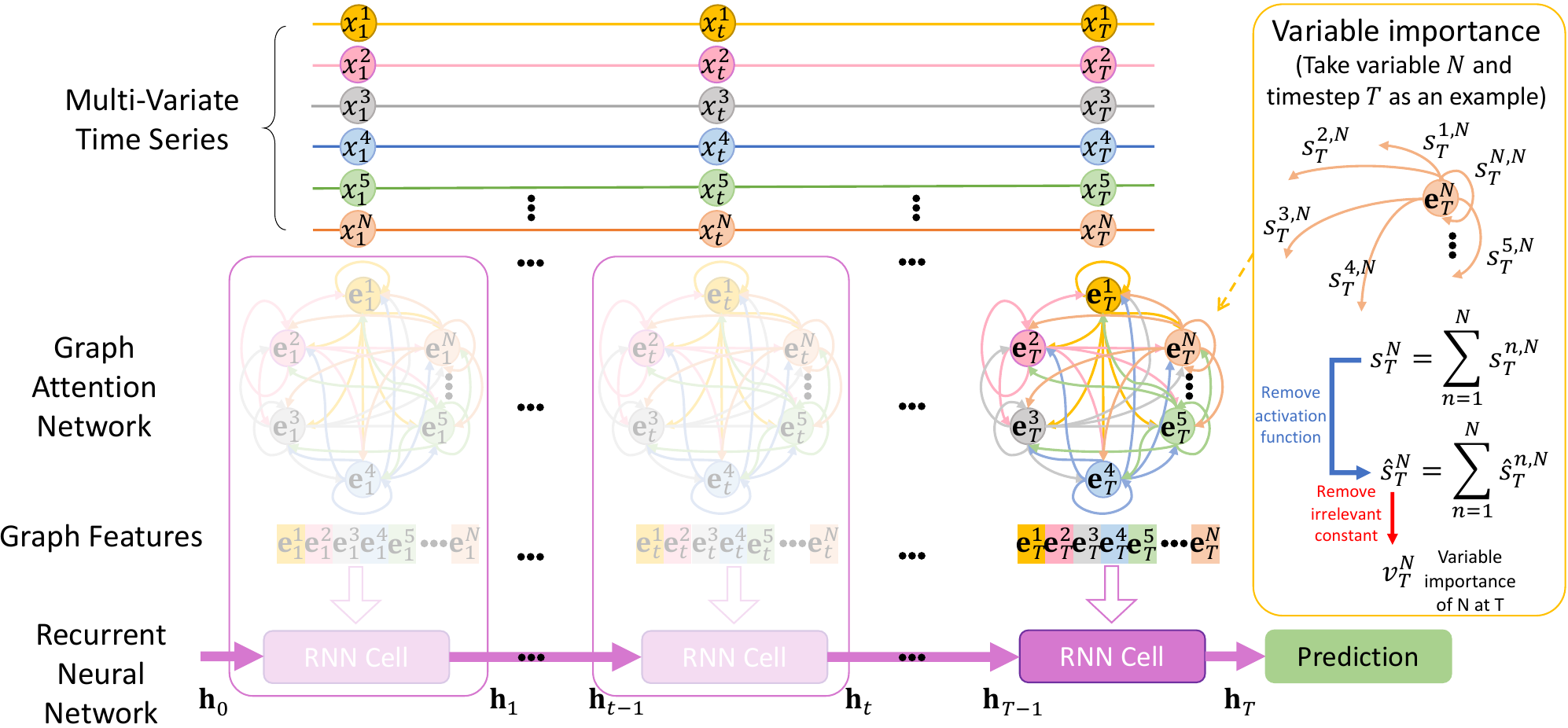}   
    \caption{Graph attentive recurrent neural networks (GARNNs). The observation of the variable $n\in\mathcal{N}\triangleq\{1,...,n,...,N\}$ at timestep $t\in\{1,...,t,...,T\}$ is $x_t^n$. The total length of the historical multivariate time series is $T$. The attention score from $j$ to $n$ is $s_t^{n,j}$ ($j\in \mathcal{N}$) for aggregating neural messages at node $n$.
    The variable importance of $j$ is $v^j_t$ which is gotten from $s^j_t$. 
    The hidden state is $\bh_t$. }
	\label{fig:model}
\end{figure}

Our proposed models, GARNNs, build a graph at each timestep and use each node $n$ of the graph $\mathcal{G}$ to represent a variable $n$ of MTS, assuming each graph is initially a complete graph (see Figure \ref{fig:model}).
Then, the input $\be_t^n$ of Equation (\ref{Eqn:aggre}) is 

\begin{equation}
\label{Eqn:emb}
    \be_t^n = {\rm ReLU}(\bw^n x_t^n + \bb^n),
\end{equation}
where learnable parameters $\bw^n \in \mathbb{R}^{E}$.
Then, we can use Equation (\ref{Eqn:aggre}-\ref{Eqn:att}) aided by Equation (\ref{Eqn:gat_ext}) or Equation (\ref{Eqn:gatv2_ext}) to update representations of $n$.
Next, we collect the latest representations $\be_t^{1:N}$ and concatenate them as $\be_t = [\be_t^1; ...;\be_t^n;...; \be_t^N]$.

After explicitly modeling correlations of these variables, we collect $\be_t$ of all timesteps, denoted as $\be_{1:T}$.
Then, we leverage RNN to aggregate them, as:
\begin{equation}
\label{eqn:gru}
    \bh_{1:T} = {\rm RNN}(\be_{1:T}, \bh_{0:T-1}),
\end{equation}
where we utilize gated recurrent unit (GRU, \cite{DBLP:conf/ssst/ChoMBB14}) as ${\rm RNN}(\cdot)$ to aggregate temporal features in this paper.
Finally, the prediction is $\hat{y}_{T+H}={\rm MLP}(\bh_T)$,
where ${\rm MLP}(\cdot)$ consists of fully connected neural networks.

Given training examples $\mathcal{I}\triangleq\{1,...,i,...,I\}$, the objective function is:
\begin{equation}
\label{eqn:obj_fun}
    J(\theta) = \frac{1}{I}\sum_i\left(\hat{y}_{T+H}(i) - y_{T+H}(i)\right)^2 + \frac{\lambda}{2}\Vert{\theta}\Vert^2_2,
\end{equation}
where $\theta$ are all the learnable parameters of our proposed model; $\lambda$ is a hyperparameter.

\subsection{Interpretability}

\begin{theorem}
\textit{There exists a universal interpretable variable importance of GAT and GATv2}
.
\end{theorem}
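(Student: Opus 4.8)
The plan is to prove the statement constructively: I would exhibit, for each source variable $j$ at a fixed timestep, an explicit importance value $v^j$ that is read off from the attention scores, show that the \emph{same} construction is available for both the GAT score in Equation~(\ref{Eqn:gat_ext}) and the GATv2 score in Equation~(\ref{Eqn:gatv2_ext}), and then verify the two properties the claim asserts: that $v^j$ is \emph{universal}, i.e.\ a single number per variable that does not depend on which target node $n$ is doing the aggregation in Equation~(\ref{Eqn:aggre}), and that it is \emph{interpretable}, i.e.\ it ranks variables consistently with the scores and normalises into $[0,1]$. Since the graph is complete, every target $n$ receives a message from every source $j$ through the shared key $\bk^j$, which is exactly what makes a target-independent per-variable quantity meaningful.

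First I would dispatch the GAT case, which follows from separability. In Equation~(\ref{Eqn:gat_ext}) the argument of ${\rm LeakyReLU}$ splits as $\ba_1^\top\bq^n + \ba_2^\top\bk^j$, where the first term depends only on the target $n$ and the second only on the source $j$. Because ${\rm LeakyReLU}$ is strictly increasing, for every fixed $n$ the ordering of the scores $\{s^{n,j}\}_j$ coincides with the ordering of $\{\ba_2^\top\bk^j\}_j$, which is free of $n$; this is precisely the static-scoring property recalled in the preliminaries. I would therefore define the universal score $s^j \triangleq \ba_2^\top\bk^j$ and the importance $v^j \triangleq {\rm Softmax}_j(s^j)$, and observe that it is target-independent by construction while, by monotonicity, it preserves the relative ranking that \emph{any} target assigns to the sources — delivering universality and interpretability at once.

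The GATv2 case is where I expect the real difficulty, and the step I would spend the most care on. Equation~(\ref{Eqn:gatv2_ext}) applies ${\rm LeakyReLU}$ only \emph{after} combining $\bq^n$ and $\bk^j$, so the score is not separable and, by the dynamic-scoring property, the highest-scoring source can genuinely change with the target. Hence no target-independent ranking exists for free, and I must \emph{impose} universality by collapsing the query dependence. My approach would be to fix one canonical query — either a reference vector $\bq^\ast$ or, more naturally here, the node's own query, giving the self score $s^j \triangleq \ba^\top{\rm LeakyReLU}(\bq^j + \bk^j)$ — so that $s^j$ becomes a function of $j$ alone and the same softmax normalisation yields $v^j\in[0,1]$. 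The substantive thing to argue is faithfulness: that collapsing the target dependence this way does not discard the information we care about. I would establish this by showing $v^j$ is monotone in the key contribution and reproduces the qualitative behaviour described for Figure~\ref{fig:244} — spiking when a variable carries a genuine observation and relaxing to a stable baseline when the input is padded by averaging.

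Finally I would unify the two cases: both constructions output a single finite score $s^j$ per variable and a normalised importance $v^j={\rm Softmax}_j(s^j)$ that is independent of the aggregating node, and this common object is what the word ``universal'' names; existence then reduces to checking that $s^j$ is well-defined and real-valued, which is immediate since $\bk^j$, $\ba$ and $\ba_2$ are finite and ${\rm LeakyReLU}$ is real-valued. The main obstacle remains the GATv2 step — justifying that one target-independent number is still a faithful summary under dynamic scoring — and the cleanest way I see to discharge it is to pin the canonical query to the node's own query, so that $v^j$ keeps a direct, clinically readable meaning.
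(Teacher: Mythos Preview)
Your GAT case is essentially the paper's argument: exploit additivity of the argument of ${\rm LeakyReLU}$, peel off the key-only term $\ba_2^\top\bk^j$, and note that the target-dependent query term does not affect the ranking over $j$. Where you diverge is GATv2, and that is exactly where the theorem's content lies. You treat the two architectures with two different constructions --- $\ba_2^\top\bk^j$ for GAT, a self-score $\ba^\top{\rm LeakyReLU}(\bq^j+\bk^j)$ for GATv2 --- whereas the word \emph{universal} in the statement means a \emph{single} formula valid for both. The paper obtains this by first stripping ${\rm LeakyReLU}$ from the score altogether (justified by monotonicity, with the resulting gap controlled later in Theorem~\ref{the:bounds}): once ${\rm LeakyReLU}$ is removed, Equations~(\ref{Eqn:gat_ext}) and~(\ref{Eqn:gatv2_ext}) collapse to the \emph{same} additive shape $\hat s^{n,j}=\ba_1^\top\bq^n+\ba_2^\top\bk^j$ (with $\ba_1=\ba_2=\ba$ in the GATv2 case), and averaging over $n$ then discarding the query-only term yields the common importance $v^j\triangleq\ba_2^\top\bk^j$ for both architectures. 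That linearisation is the missing idea in your proposal; without it you have two constructions, not one, and the theorem as stated is not established.

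Two further points. First, the paper does not normalise by ${\rm Softmax}$; it keeps the raw linear form $v^j=\ba_2^\top\bW_2\be^j+\ba_2^\top\bb_2$, and it is precisely this explicit factorisation into (learned variable contribution)$\times$(variable embedding)$+$(constant bias) that carries the interpretability claim --- a softmaxed self-score does not decompose this way. Second, your faithfulness argument for the self-query construction is not a proof: ``monotone in the key contribution'' has no coordinate-free meaning for $\ba^\top{\rm LeakyReLU}(\bq^j+\bk^j)$, and appealing to the behaviour in Figure~\ref{fig:244} is empirical rather than deductive. The fix is not to pin a canonical query but to linearise first; then no query choice is needed and the GAT and GATv2 cases become literally identical.
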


\begin{proof}
{\color{black}
In BGLP scenario, we assume each node $n$ connects with all of the nodes of $\mathcal{N}$. Then, as for the neighbour nodes $\mathcal{N}^n$ of the node $n$, we have $\mathcal{N}^n = \mathcal{N}$.}
Considering Equation (\ref{Eqn:aggre}-\ref{Eqn:att}) and Equation (\ref{Eqn:gat_ext}-\ref{Eqn:gatv2_ext}), for the variable $n$, the higher the score $s_t^{n,j}$, the more important $j$ to $n$.
The mean of the scores from $j$ to all variables of $\mathcal{N}$ is denoted as:
\begin{equation}
    \label{Eqn:mean_score}
    s_t^j = \frac{1}{N}\sum_n s^{n,j}_t.
\end{equation}
It is the impact of the variable $j$ on all the variables.
{\color{black}
Theoretically, a higher level of importance assigned to variable $j$ corresponds to an increased mean score $s_t^j$. 
As a result, an enhanced amount of information from $j$ is conveyed to and preserved within the embedding $\be_t$ collected from GAT/GATv2, thereby amplifying the contribution of $j$ in the prediction process.
}

However, we do not regard $s_t^j$ as the variable importance of $j$ at timestep $t$.
Instead, we extract its variable importance from $s_t^j$ by removing irrelevant information.
Specifically, considering that ${\rm LeakyReLU}(x)$ is monotonic with respect to $x$, we remove ${\rm LeakyReLU}(\cdot)$ from Equation (\ref{Eqn:mean_score}).
GAT and GATv2 can be organised in the same format, as:
\begin{align}
    \label{eqn:vari_imp_1}
    \hat{s}_t^j &= \frac{1}{N}\sum_n \hat{s}^{n,j}_t = \frac{1}{N}\sum_n (\ba_1^\top\bq^n_t + \ba_2^\top\bk^j_t), \\
    \label{eqn:vari_imp_2}
    &=\underbrace{\ba_2^\top \bk^j_t}_\text{variable importance} + \frac{1}{N}\sum_n \ba_1^\top\bq^n_t
\end{align}
where $\ba_1$ is unequal and equal to $\ba_2$ in GAT and GATv2 repectively.
Then, we define the variable importance $v^j_t$ by removing the irrelevant item $\frac{1}{N}\sum_n \ba_1^\top\bq^n_t$ from $\hat{s}_t^j$:
\begin{equation}
\label{Eqn:vari_imp}
    v^j_t \triangleq  \ba_2^\top \bk^j_t = \underbrace{\ba_2^\top  \bW_2}_\text{variable contribution} \underbrace{\be^j_t}_\text{variable embedding} + \underbrace{\ba_2^\top\bb_2}_\text{constant bias}.
\end{equation}

On the other hand, we can consider multiple layers of GAT or GATv2, the variable importance $v^j_t$ is defined as:
\begin{align}
    v^j_t = \frac{1}{L}\sum_l v_t^{j,l}, \ \ v^{j,l}_t \triangleq {\ba_2^{l}}^{\top} \bk^{j,l}_t.
\end{align}
Given $\mathcal{I}$, the variable importance of $v_j$ over $\mathcal{I}$ is:
\begin{equation}
\label{Eqn:v_imp_all}
    v^j(\mathcal{I}) = \frac{1}{IT}\sum_{i,t} v_t^j(i),
\end{equation}
where $v_t^j(i)$ is the variable importance of $j$ at timestep $t$ of the $i$-th sample;
{\color{black}
the total number of training examples is $I$;
the length of the MTS is $T$.
}
\end{proof}

As in Equation (\ref{Eqn:vari_imp}), our proposed variable importance $v_t^j$ is fully understandable.
It consists of a variable contribution $\ba_2^\top  \bW_2$, a variable embedding $\be_t^j$ and a constant bias $\ba_2^\top \bb_2$.
The variable contribution and constant bias are learnable, so $v_t^j$ is directly guided by the variable embedding $\be^j_t$.
When $L=1$, the variable importance $v^j(\mathcal{I})$ is only affected by $j$.
When $L>1$, the variable importance $v^j(\mathcal{I})$ considers correlations of $j$ and other variables.

\begin{theorem}
The variable importance $v^j_t$ is from static scoring.
\end{theorem}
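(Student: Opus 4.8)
The plan is to exploit the single structural fact that makes this statement nearly immediate: the variable importance $v^j_t = \ba_2^\top \bk^j_t$ from Equation~(\ref{Eqn:vari_imp}) depends only on the key $\bk^j_t$ and carries no dependence whatsoever on a query $\bq^n_t$. Recall that the static scoring property asks for one distinguished key $m^{max}$ that simultaneously maximizes the score across \emph{every} query $n$, whereas dynamic scoring permits the maximizing key $\phi(n)$ to vary with the query. Here the key index set coincides with the query set, $\mathcal{M}=\mathcal{N}$, since each graph is complete. I would therefore aim to produce such a query-independent maximizer explicitly and check it directly against the static scoring definition.

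First I would return to the separable form used in the proof of Theorem~1, namely the per-pair score $\hat{s}_t^{n,j} = \ba_1^\top\bq^n_t + \ba_2^\top\bk^j_t$ underlying Equation~(\ref{eqn:vari_imp_1}), which holds for both GAT and GATv2 after ${\rm LeakyReLU}(\cdot)$ has been removed (with $\ba_1=\ba_2$ in the GATv2 case). The decisive observation is that this score is additively separable into a query-only term $\ba_1^\top\bq^n_t$ and a key-only term $\ba_2^\top\bk^j_t = v^j_t$. I would then fix an arbitrary timestep $t$, set $j^{max}=\arg\max_j v^j_t$, and note that since the query term is constant in $j$ for each fixed $n$, the key inequality
\[
\hat{s}_t^{n,j} = \ba_1^\top\bq^n_t + v^j_t \leq \ba_1^\top\bq^n_t + v^{j^{max}}_t = \hat{s}_t^{n,j^{max}}
\]
holds for every $n\in\mathcal{N}$ and every $j\in\mathcal{N}$. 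Because the single index $j^{max}$ realizes the maximum for all queries, this is exactly the static scoring condition with $m^{max}=j^{max}$, so $v^j_t$ induces a static scoring.

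To finish I would address the subtleties separately for GAT and GATv2, which is where I expect the real obstacle to lie. For GAT the outer ${\rm LeakyReLU}(\cdot)$ is a monotone scalar map, so $\arg\max_j s_t^{n,j} = \arg\max_j v^j_t$ and the static maximizer of the \emph{original} score is literally recovered, consistent with GAT being static. For GATv2 the situation is genuinely delicate: its true score $\ba^\top{\rm LeakyReLU}(\bq^n_t+\bk^j_t)$ is dynamic by design, and deleting the interior ${\rm LeakyReLU}(\cdot)$ does \emph{not} preserve that argmax. Hence the theorem must be read as a statement about $v^j_t$ itself, the separable and query-independent surrogate obtained after removing the nonlinearity, rather than about GATv2's own maximizer; the honest content is that this surrogate collapses the dynamic GATv2 score onto a static one. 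I would also remark that the multilayer importance $v^j_t = \frac{1}{L}\sum_l v^{j,l}_t$ is an average of query-independent terms and is therefore itself query-independent, so the same global-maximizer argument applies unchanged. The conceptual gap between the two-argument scoring definition ${\rm f}(\bq^n,\bk^m)$ and the one-argument quantity $v^j_t$ is thus the part requiring the most care, and bridging it amounts to recognizing separability as equivalent to query-independence of the maximizer.
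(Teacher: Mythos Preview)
Your proposal is correct and follows essentially the same route as the paper: both arguments rest on the additive separability $\hat{s}_t^{n,j} = \ba_1^\top\bq^n_t + \ba_2^\top\bk^j_t$, observe that the query term is constant in $j$ for fixed $n$, and conclude that the maximizing key index $j^{max}$ is the same for every query, which is precisely the static scoring condition. Your treatment is more explicit about the GAT/GATv2 distinction and the multilayer case than the paper's, but the core argument is identical.
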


\begin{proof}
    Considering that $ \hat{s}^{n,j}_t = \ba_1^\top\bq^n_t + \ba_2^\top\bk^j_t$, when $n$ and $t$ is fixed, $\ba_1^\top\bq^n_t$ can be seen as a constant.
    $\hat{s}^{n,j}_t$ is largely affected by $\bk_t^j$.
    There can exist a $j^\prime \in \mathcal{N}$, making $\hat{s}^{n,j}_t\leq\hat{s}^{n,j^\prime=j^{max}}_t$.
    This observation holds for any $n$ if $t$ is fixed.
    The variable importance $v^j_t$ is extracted from $\hat{s}^{j}_t$ which is the mean of $\hat{s}^{n,j}_t$ over $n$, so $v^j_t$ is from static scoring.
    \end{proof}

    
{\color{black}
The use of static scoring for determining variable importance aligns with our expectations, primarily because dynamic scoring does not guarantee consistent significance of variables. 
For instance, in the case of GATv2, if we arrange variable $j$ based on the value of $s_t^{n,j}$ among $\{s_t^{n,j}| j\in\mathcal{N}\}$, the dynamic scoring leads to considerable fluctuations in the ranking of $j$ within $\mathcal{N}$ as $n$ varies. Consequently, using $s_t^j$ from GATv2 as a measure of variable importance becomes unreliable.

In contrast, the ranking of $j$ in $ \mathcal{N}$ based on the value of  $\hat{s}_t^{n,j}$ in $\{\hat{s}_t^{n,j}| j\in\mathcal{N}\}$ remains constant despite changes in $n$. Averaging $\hat{s}_t^{n,j}$ over $n$ to reevaluate the ranking of $j$ does not alter its position, emphasizing the necessity for static scoring in assessing variable importance.
Furthermore, eliminating $\frac{1}{N}\sum_n \mathbf{a}_1^\top\mathbf{q}^n_t$ from $ \hat{s}_t^{j}$ also maintains the ranking of $j$, supporting this approach.

Additionally, the implementation of variable importance based on static scoring does not interfere with the dynamic scoring function of GATv2 in mapping out variable correlations. This ensures that GATv2 remains both effective in modeling and consistent in calculating variable importance.
}
    
\begin{theorem}
\label{the:bounds}
The difference between $\hat{s}_t^j$ and $s_t^j$ is bounded by small values depending on the slope $\alpha\in[0,1]$ of ${\rm LeakyReLU}(\cdot)$.
\end{theorem}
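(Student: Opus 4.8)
The plan is to reduce everything to one elementary identity for ${\rm LeakyReLU}$ and then transport it through the averaging in Equation~(\ref{Eqn:mean_score}) that defines $s_t^j$ and $\hat{s}_t^j$. First I would record the pointwise fact that, for every slope $\alpha\in[0,1]$ and every real $x$,
\[
{\rm LeakyReLU}(x) - x = (1-\alpha)\max(-x,0)\geq 0,
\]
so that discarding ${\rm LeakyReLU}$ perturbs a value by a nonnegative quantity controlled by the factor $(1-\alpha)$ and by the negative part of its argument. In particular the gap vanishes identically at $\alpha=1$, which is precisely the regime in which the nonlinearity is the identity and $\hat{s}_t^j$ coincides with $s_t^j$; this already tells me the bound should be linear in $(1-\alpha)$.

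Next I would dispatch the GAT case, where the activation is applied to the scalar pre-activation, $s_t^{n,j}={\rm LeakyReLU}(\hat{s}_t^{n,j})$. Applying the identity to each pair $(n,j)$ and averaging over $n$ gives
\[
0\leq s_t^j - \hat{s}_t^j = (1-\alpha)\,\frac{1}{N}\sum_n \max\bigl(-\hat{s}_t^{n,j},0\bigr),
\]
whence $|s_t^j-\hat{s}_t^j|\leq (1-\alpha)\,\max_n\max(-\hat{s}_t^{n,j},0)$, a one-sided bound that is linear in $(1-\alpha)$ and therefore small for $\alpha$ near one.

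For GATv2 the activation sits inside the read-out, $s_t^{n,j}=\ba^\top{\rm LeakyReLU}(\bq_t^n+\bk_t^j)$ with ${\rm LeakyReLU}$ acting coordinatewise. Setting $\bz_t^{n,j}=\bq_t^n+\bk_t^j$ and applying the identity coordinate by coordinate, I would get $s_t^{n,j}-\hat{s}_t^{n,j}=(1-\alpha)\,\ba^\top\max(-\bz_t^{n,j},\mathbf{0})$. Because $\ba$ may carry mixed signs, this difference is no longer one-signed, so instead of a monotone gap I would invoke Cauchy--Schwarz to obtain $|s_t^{n,j}-\hat{s}_t^{n,j}|\leq (1-\alpha)\,\Vert\ba\Vert_2\,\Vert\max(-\bz_t^{n,j},\mathbf{0})\Vert_2$, and then average over $n$ using the triangle inequality, which preserves the $(1-\alpha)$ prefactor.

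The hard part will be exactly this GATv2 step: since the nonlinearity precedes the linear read-out $\ba^\top(\cdot)$, one cannot extract a clean one-sided estimate as in GAT, and the bound must instead be routed through a norm inequality on the negative part of $\bz_t^{n,j}$. Everything else is bookkeeping: both estimates take the form $(1-\alpha)$ times a data-dependent magnitude, which establishes that the discrepancy introduced by removing ${\rm LeakyReLU}$ is governed by the slope $\alpha$ and collapses to zero as $\alpha\to1$, as claimed.
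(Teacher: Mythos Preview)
Your decomposition is essentially the paper's: the identity ${\rm LeakyReLU}(x)-x=(1-\alpha)\max(-x,0)$ is exactly what the paper encodes via the diagonal indicator matrix $\Tilde{\bI}^{n,j}_t$ with entries $1$ or $\alpha$, so that $(\Tilde{\bI}^{n,j}_t-\bI)\bmm_t^{n,j}=(1-\alpha)\max(-\bmm_t^{n,j},\mathbf{0})$ componentwise. Your one-sided inequality $0\leq s_t^j-\hat{s}_t^j$ for GAT and your Cauchy--Schwarz step for GATv2 both match the paper; the only cosmetic difference is that the paper first averages over $n$ and then applies Cauchy--Schwarz to the aggregated vector, whereas you bound termwise and then use the triangle inequality.

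Where you stop short is in the word ``small''. You leave the residual factor as a ``data-dependent magnitude'' and argue smallness only in the limit $\alpha\to 1$. But the default LeakyReLU slope is far from $1$, so $(1-\alpha)$ alone does not make the bound small. The paper closes this gap by invoking the $L_2$ penalty in the training objective, Equation~(\ref{eqn:obj_fun}): by a Lagrange-multiplier argument the soft constraint $\tfrac{\lambda}{2}\Vert\theta\Vert_2^2$ corresponds to a hard constraint $\Vert\theta\Vert_2<\epsilon$, hence every learned weight satisfies $|w|<\epsilon$. Combining this with the assumption that each entry of the layer input $\be_t^{j,l}$ lies in $[-c,c]$, the paper obtains $\Vert\ba\Vert_2<\sqrt{A}\,\epsilon$ and $\Vert\bq_t^n+\bk_t^j\Vert_\infty\leq 2(Ec+1)\epsilon$, which turns your ``data-dependent magnitude'' into the explicit constant $(1-\alpha)\,2A(Ec+1)\epsilon^2$. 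Without this regularization step your bound is correct as an inequality but does not substantiate the ``small'' in the theorem statement.
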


\begin{proof}
In terms of GATv2 (see Equation (\ref{Eqn:gatv2_ext})), we have:
\begin{equation}
    s_t^j = \frac{1}{N}\sum_n \ba^\top \Tilde{\bI}^{n,j}_t (\bq^n_t+\bk^j_t)=  \frac{1}{N}\sum_n \ba^\top \Tilde{\bI}^{n,j}_t \bmm_t^{n,j},
\end{equation}
where $\Tilde{\bI}^{n,j}_t$ is an indicate diagonal matrix, as:
\begin{equation*}
\begin{split}
    &\Tilde{\bI}^{n,j}_t = {\rm Diag}(i^{n,j}_{t,1}, \cdots, i^{n,j}_{t,a}, \cdots, i^{n,j}_{t,A} ),\\
 &i^{n,j}_{t,a}= \begin{cases}
1,\quad &m_{t,a}^{n,j}\geq 0 \\
\alpha,\quad &m_{t,a}^{n,j}<0, \alpha \in [0, 1],
\end{cases} 
\end{split}
\end{equation*}
and $m_{t,a}^{n,j}$ is the $a$-th value of the vector $\bmm_{t}^{n,j}$.
\begin{align}
    s_t^j - \hat{s}_t^j &= \frac{1}{N}\sum_n \ba^\top \Tilde{\bI}^{nj}_t \bmm_t^{n,j} - \frac{1}{N}\sum_n \ba^\top \bmm_t^{n,j},\\
   &= \Vert \ba \Vert_2 \left \Vert \frac{1}{N}\sum_n ( \Tilde{\bI}^{n,j}_t - \bI  )\bmm_t^{n,j} \right \Vert_2 \cos{\beta},
\end{align}
where vetorial angle is $\beta$.
The boundary of the difference between $\hat{s}_t^j$ and $s_t^j$ is:
\begin{equation}
    0\leq \lvert s_t^j - \hat{s}_t^j \lvert \leq \Vert \ba \Vert_2 \left \Vert \frac{1}{N}\sum_n ( \Tilde{\bI}^{n,j}_t - \bI)\bmm_t^{n,j} \right \Vert_2.
\end{equation}

Equation (\ref{eqn:obj_fun}) has a soft constraint for $\Vert\theta\Vert_2 < \epsilon$. Based on the theory of Lagrange multipliers, there exists a $\lambda$ value that is equivalent to the hard constraint for $\Vert\theta\Vert_2 < \epsilon $, where $\epsilon$ is a small positive value. Then, for a learnable parameter $w$, we can have $\lvert w\lvert<\epsilon, \forall w\in\theta$. 
%
%
We assume each value of the input vector $\be^{j,l}_t$ of each GATv2 layer belongs to $[-c, c]$.
Given that $\Vert \ba \Vert_2 < \sqrt{A}\epsilon$ and $\left\Vert \frac{1}{N}\sum_n ( \Tilde{\bI}^{n,j}_t - \bI  )\bmm_t^{n,j} \right \Vert_2 \leq (1-\alpha)2\sqrt{A}(Ec + 1)\epsilon$, we have: 
\begin{equation}
     0 \leq \vert s_t^{j} - \hat{s}_t^{j} \vert  \leq (1-\alpha)2A(Ec + 1)\epsilon^2.
\end{equation}

Similarly, in terms of GAT, we have:
\begin{equation}
    0 \leq s_t^{j} - \hat{s}_t^{j}\leq (1-\alpha)2A(Ec + 1)\epsilon^2.
\end{equation}


\end{proof}

The bounds can be treated as the gaps between the modeling for prediction and the calculation of variable importance.
The gaps are affected by $\alpha$. Nevertheless, changing $\alpha$ has a slight impact on variable importance (see Figure \ref{fig:variable_importance_alpha}).
Hence, we hold the view that the small gaps between the prediction and the variable importance can be ignored.

\section{Experiments}

\subsection{Datasets}
We select four datasets containing multi-modal data with high quality, where T1DM and T2DM groups in Shanghai dataset \citep{zhao2023chinese} are regarded as two datasets.
{\color{black}
All datasets used in the study, except for ArisesT1DM (NCT ID: NCT03643692), are publicly accessible. The ArisesT1DM dataset was gathered in full compliance with relevant legal regulations. Consequently, ethical approval is not further required for this research.
}

\textbf{OhioT1DM \citep{DBLP:conf/ecai/MarlingB20}}: it is a well-validated dataset and has been used as a benchmark dataset in BGLP Challenges.
It has 12 participants (age: 20-60 years; male/female: 7/5), containing 8 weeks' CGM (Medtronic Enlite), sensor band (Empatica or Basis Peak) and self-reported data (meal, bolus, sleep, exercise, etc.).
Each participant has more than 10,000 CGM measurements, and BG readings are sampled every 5 minutes ($\delta t = 5\ {\rm min}$).

\textbf{ShanghaiT1DM and ShanghaiT2DM \citep{zhao2023chinese}}: it is a public dataset, consisting of 12 participants (age: 37-73 years; male/female: 7/5) with T1DM and 100 participants (age: 22-97 years, male/female: 44/56) with T2DM. It has 3-14 days' CGM (FreeStyle Libre H) and self-reported data.
There are 15,695 and 112,475 CGM measurements of people with T1DM and T2DM respectively, and BG readings are sampled per 15 minutes ($\delta t = 15\ {\rm min}$).

\textbf{ArisesT1DM (NCT ID: NCT03643692)}: it has 12 participants (age: 30–49 years; male/female: 6/6) and 6 weeks' CGM (Dexcom G6), sensor band (Empatica E4) and self-reported data.
Each participant has more than 10,000 CGM readings, and BG readings are sampled every 5 minutes ($\delta t = 5\ {\rm min}$).
In order to simplify the exhibition of experiment results, we leverage abbreviations of variable names \cite{DBLP:journals/npjdm/ZhuULHOG22}, e.g., electrodermal activity (EDA).   

OhioT1DM has been originally divided into training data and testing data by time for each participant.
We further split the training data into the training part (80\%) and validation part (20\%).
In terms of the rest three datasets, we respectively split the data into training data (60\%), validation data (20\%) and testing data (20\%) by time for each participant.
We use sliding windows to generate examples $(\bX,y_{T+H})$, i.e., $T=48$ and $H=6$ for $\delta t = 5\ {\rm min}$, or $T=16$ and $H=2$ for $\delta t = 15\ {\rm min}$.
All the examples are normalized by standard normalization and padded by zeros.
We consider almost all the variables (see Figure \ref{fig:variable_importance}) in each dataset. However, certain variables with poor data quality, such as ``stressors" in the OhioT1DM dataset, were excluded from our analysis.

\subsection{Baselines}
\label{sec:baseline}
\textbf{SHapley Additive exPlanations (SHAP, \cite{DBLP:conf/nips/LundbergL17}) and Local Interpretable Model-agnostic Explanations (LIME, \cite{DBLP:conf/kdd/Ribeiro0G16})}: they are model-agnostic methods, providing variable importance for any methods. Given that our proposed methods got the best predicting performance compared with all of baseline methods, we leverage SHAP and LIME to explain our proposed methods for comparisons. We directly treat the absolute value of the  variable importance provided by SHAP/LIME as $v^j$, and $v^j(\mathcal{I})$ is gotten by Equation (\ref{Eqn:v_imp_all}).

\textbf{Linear Regression (LR)}: it is a linear method.
We flatten the input $\bX$ and use scikit-learn \citep{scikit-learn} to fit models.
We aggregate the coefficients of the model as $v^j(\mathcal{I})$.

\textbf{eXtreme Gradient Boosting (XGBoost, \cite{Chen:2016:XST:2939672.2939785})}: it is an optimized distributed gradient boosting approach. We flatten the input $\bX$ to fit models. 
We regard the average gain as $v^j(\mathcal{I})$, where the gain is collected across all splits when using variables.

\textbf{REverse Time AttentIoN (RETAIN, \citep{DBLP:conf/nips/ChoiBSKSS16})}: it is an interpretable RNN model. The variable importance is calculated by the outputs of two RNNs, some learnable parameters and the input value of a variable. We use the absolute value of the variable importance of RETAIN as $v^j_t$, and $v^j(\mathcal{I})$ is gotten by Equation (\ref{Eqn:v_imp_all}). 

\textbf{Interpretable
Multi-Variable Long Short-Term Memories (IMV-LSTMs, \citep{DBLP:conf/icml/0002LA19})}: both \textbf{IMV-TENSOR} and \textbf{IMV-FULL} are interpretable LSTMs by generating variable importance and variable-wise temporal importance.
We directly average the variable importance of IMV-LSTMs with $\mathcal{I}$ examples and regard the mean variable importance as $v^j(\mathcal{I})$.
Meanwhile, we treat the variable-wise temporal importance as $v^j_t$.
%

\textbf{Explainable Tensorized Neural Ordinary Differential Equations (ETN-ODE, \citep{9757812})}: it consists of: 1) Tensorized GRU; 2) tandem attention; 3) ordinary differential equation network. 
Part 1 and 2 are for the interpretation, which is similar as IMV-TENSOR, and part 3 is for the arbitrary-step prediction.
%

\textbf{ATTention of Time series before Long Short-Term Memory (ATT-T-LSTM, \citep{kaji2019attention})}: it separately adds temporal attention for each variable before passing through LSTM. We regard the temporal attention as $v_t^j$, while $v^j(\mathcal{I})$ cannot be calculated by this method. 

\textbf{ATTention of Features before Long Short-Term Memory (ATT-F-LSTM, \citep{gandin2021interpretability})}: it adds variable attention at each timestep before LSTM. Hence, the attention weight of a variable $j$ at timestep $t$ is $v_t^j$,  and $v^j(\mathcal{I})$ is calculated via Equation (\ref{Eqn:v_imp_all}).

\textbf{Neural Basis Expansion Analysis for
Interpretable Time Series forecasting (N-BEATS, \cite{DBLP:conf/iclr/OreshkinCCB20})}: it is a non-interpretable deep learning model for univariate time series modeling. It utilizes a stack of fully connected neural network layers to model time series data, offering remarkable accuracy and flexibility.

\textbf{Neural Hierarchical Interpolation for Time Series forecasting (NHiTS, \cite{DBLP:conf/aaai/ChalluOORCD23})}: it is a non-interpretable deep learning model for MTS modeling, extending the N-BEATS and performing better in long-horizon prediction.

{\color{black}
Our proposed method Graph Attentive Recurrent Neural Net-
works (\textbf{GARNNs}) are represented by Graph Attention neTworks (GAT \cite{DBLP:conf/iclr/VelickovicCCRLB18}) or GATv2 \cite{DBLP:conf/iclr/Brody0Y22}) and Gated Recurrent Unit (GRU, \cite{DBLP:conf/ssst/ChoMBB14}), i.e., \textbf{``GAT+GRU''} and \textbf{``GATv2+GRU''}.}

All the deep methods are implemented by PyTorch 1.11.0 following their original codes on github and run with NVIDIA RTX 3090 Ti. 
All the methods except LR are trained four times by changing the random seed.
In terms of XGBoost, we search for the learning rate in $\{0.01, 0.1, 1.0\}$, n\_estimators in $\{50,100,200\}$, max\_depth in $\{3, 4, 5, 6, 7\}$, gamma in $\{0.5, 1, 1.5, 2, 5\}$ and min\_child\_weight in $\{1, 5, 10\}$.
For all the deep methods, we search for the learning rate in $\{10^{-3}, 10^{-4}, 10^{-5}\}$.
For IMV-LSTM and ETN-ODE, we find the variable-wise hidden state size in $\{8, 16, ..., 512/N\}$.
In terms of the rest deep methods, we search for hidden state size in $\{128, 256, 512\}$.
Besides, we also find $\lambda$ in $\{10^{-4}, 10^{-5}, 10^{-6}\}$.
We choose hyperparameters by the performance of the validation data based on the metrics in the following subsection. 

\subsection{Metrics}
Considering the root mean square error (RMSE), mean absolute percentage error (MAPE) and mean absolute error (MAE), we also leverage the glucose-specific RMSE, denoted as gRMSE \citep{DBLP:journals/tbe/FaveroFC12, DBLP:journals/npjdm/ZhuULHOG22}, to evaluate all the methods. 
The gRMSE
penalizes overestimation in
hypoglycemia and underestimation in hyperglycemia.
These two conditions can cause severe consequences to patients' health.
Besides, time lag is determined by analyzing the correlation between the forecasted BG levels and the actual readings from CGM.

\subsection{Comparison of prediction performance}
\begin{table}[tb]
		\renewcommand{\arraystretch}{1.3}
		\caption{Prediction of blood glucose (BG) levels in OhioT1DM. 
  }
	\label{tab:ohiot1dm}
		\centering
  \begin{threeparttable}
		\resizebox{0.5\textwidth}{!}{
\begin{tabular}{ccccccl}
\hline
Methods          & RMSE (mg/dL)                   & MAPE (\%)                      & MAE (mg/dL)                    & gRMSE (mg/dL)                  & \multicolumn{2}{c}{Time lag (min)}                \\ \hline
LR              & 22.19$\pm$0.00(2.79)$\ddagger$ & 10.90$\pm$0.00(2.13)$\ddagger$ & 15.92$\pm$0.00(1.95)$\ddagger$ & 27.69$\pm$0.00(3.72)$\ddagger$ & \multicolumn{2}{c}{8.34$\pm$0.00(6.06)$\dagger$}  \\
XGBoost         & 22.51$\pm$0.04(3.32)$\ddagger$ & 10.92$\pm$0.04(2.24)$\ddagger$ & 16.08$\pm$0.05(2.26)$\ddagger$ & 28.89$\pm$0.08(4.77)$\ddagger$ & \multicolumn{2}{c}{9.36$\pm$0.12(6.47)$\ddagger$} \\ \hline
RETAIN          & 20.30$\pm$0.08(2.64)$\ddagger$ & 9.78$\pm$0.04(1.81)$\ddagger$  & 14.41$\pm$0.03(1.75)$\ddagger$ & 25.48$\pm$0.16(3.47)$\ddagger$ & \multicolumn{2}{c}{7.39$\pm$0.09(4.95)}           \\
IMV-FULL        & 21.61$\pm$0.32(2.99)$\ddagger$ & 10.21$\pm$0.18(1.79)$\ddagger$ & 15.18$\pm$0.23(1.89)$\ddagger$ & 27.16$\pm$0.40(4.16)$\ddagger$ & \multicolumn{2}{c}{6.42$\pm$0.28(4.50)}           \\
IMV-TENSOR      & 20.15$\pm$0.03(2.77)$\ddagger$ & 9.54$\pm$0.02(1.82)$\ddagger$  & 14.00$\pm$0.02(1.75)$\ddagger$ & 25.42$\pm$0.07(3.79)$\ddagger$ & \multicolumn{2}{c}{7.57$\pm$0.11(4.86)$\dagger$}  \\
ETN-ODE         & 21.00$\pm$0.22(2.92)$\ddagger$ & 10.11$\pm$0.13(1.95)$\ddagger$ & 14.78$\pm$0.18(1.93)$\ddagger$ & 26.41$\pm$0.32(3.94)$\ddagger$ & \multicolumn{2}{c}{8.64$\pm$0.35(5.22)$\ddagger$} \\
ATT-T-LSTM      & 21.62$\pm$0.45(2.98)$\ddagger$ & 10.46$\pm$0.25(1.99)$\ddagger$ & 15.32$\pm$0.39(1.96)$\ddagger$ & 27.08$\pm$0.54(3.95)$\ddagger$ & \multicolumn{2}{c}{8.12$\pm$0.36(5.36)$\ddagger$} \\
ATT-F-LSTM      & 20.31$\pm$0.06(2.69)$\ddagger$ & 9.78$\pm$0.04(1.81)$\ddagger$  & 14.29$\pm$0.04(1.72)$\ddagger$ & 25.52$\pm$0.09(3.67)$\ddagger$ & \multicolumn{2}{c}{7.48$\pm$0.15(5.24)$\ast$}     \\ \hline
N-BEATS          & 20.15$\pm$0.05(2.56)$\ddagger$ & 9.62$\pm$0.03(1.77)$\ddagger$  & 14.11$\pm$0.04(1.68)$\ddagger$ & 25.31$\pm$0.07(3.37)$\ddagger$ & \multicolumn{2}{c}{7.98$\pm$0.12(5.23)$\ddagger$} \\
NHiTS           & 20.14$\pm$0.03(2.47)$\ddagger$ & 9.60$\pm$0.02(1.74)$\ddagger$  & 14.07$\pm$0.02(1.61)$\ddagger$ & 25.24$\pm$0.07(3.20)$\ddagger$ & \multicolumn{2}{c}{7.55$\pm$0.23(4.61)$\ast$}     \\ \hline
GAT+GRU (L=1)   & 19.03$\pm$0.07(2.40)           & 9.10$\pm$0.03(1.77)            & 13.37$\pm$0.03(1.65)           & 23.75$\pm$0.09(3.18)           & \multicolumn{2}{c}{6.24$\pm$0.14(4.45)}           \\
GAT+GRU (L=2)   & 19.08$\pm$0.04(2.38)           & 9.08$\pm$0.02(1.76)            & 13.37$\pm$0.02(1.64)           & 23.82$\pm$0.08(3.15)           & \multicolumn{2}{c}{6.19$\pm$0.25(4.51)}           \\
GATv2+GRU (L=1) & \textbf{18.97$\pm$0.06(2.43)}  & \textbf{9.07$\pm$0.01(1.78)}   & \textbf{13.34$\pm$0.02(1.68)}  & \textbf{23.65$\pm$0.10(3.21)}  & \multicolumn{2}{c}{\textbf{6.19$\pm$0.14(4.47)}}  \\
GATv2+GRU (L=2) & 19.11$\pm$0.15(2.45)           & 9.08$\pm$0.03(1.78)            & 13.38$\pm$0.06(1.68)           & 23.89$\pm$0.22(3.29)           & \multicolumn{2}{c}{6.30$\pm$0.14(4.75)}           \\ 
\hline
\end{tabular}

		}
    \begin{tablenotes}
   
    \item {\fontsize{5.5}{1.5}\selectfont $\ast p\leq 0.05 $;\ $\dagger p\leq 0.01 $;\ $\ddagger p\leq 0.005 $;\par}
    \item {\fontsize{5.5}{1.5}\selectfont Total historical timetamps is $T=48$; Prediction horizon is $H=6$; BG levels are sampled every $\delta t = 5\ {\rm min}$;\par}
    \item {\fontsize{5.5}{1.5}\selectfont RMSE: root mean square error; MAPE: mean absolute percentage error;  \par}
    \item {\fontsize{5.5}{1.5}\selectfont MAE: mean absolute error; gRMSE: glucose-specific RMSE;  \par}
    \item {\fontsize{5.5}{1.5}\selectfont The result is formatted as ``$mean\pm sd_1(sd_2)$''; \par}
    \item {\fontsize{5.5}{1.5}\selectfont $sd_1$ is the standard deviation after running the experiments four times by changing random seed; \par}
    \item {\fontsize{5.5}{1.5}\selectfont $sd_2$ is the standard deviation of the metric results across the participants. \par}
    \item {\fontsize{5.5}{1.5}\selectfont {\color{black}Explanation of methods can refer to section \ref{sec:baseline}.}
        
    }
    
    \end{tablenotes}
  \end{threeparttable}
\end{table}
\begin{table}[tb]
		\renewcommand{\arraystretch}{1.3}		
  	\caption{Prediction of blood glucose (BG) levels in ShanghaiT1DM.}
	\label{tab:shanghait1dm}
		\centering
    \begin{threeparttable}
		\resizebox{0.5\textwidth}{!}{

\begin{tabular}{ccccccl}
\hline
Methods          & RMSE (mg/dL)                    & MAPE (\%)                       & MAE (mg/dL)                     & gRMSE (mg/dL)                   & \multicolumn{2}{c}{Time lag (min)}               \\ \hline
LR              & 22.57$\pm$0.00(23.53)$\ast$     & 11.47$\pm$0.00(6.38)            & 15.81$\pm$0.00(13.32)           & 26.33$\pm$0.00(26.77)           & \multicolumn{2}{c}{2.50$\pm$0.00(5.59)}          \\
XGBoost         & 22.68$\pm$0.14(12.65)$\ddagger$ & 17.19$\pm$0.18(14.52)$\ddagger$ & 17.67$\pm$0.07(10.86)$\ddagger$ & 29.01$\pm$0.27(17.30)$\ddagger$ & \multicolumn{2}{c}{2.50$\pm$0.00(5.59)}          \\ \hline
RETAIN          & 16.25$\pm$0.94(5.93)$\ast$      & 10.77$\pm$0.71(6.73)$\dagger$   & 12.28$\pm$0.57(5.06)$\ddagger$  & 20.00$\pm$1.41(8.42)$\ast$      & \multicolumn{2}{c}{3.12$\pm$0.62(6.04)}          \\
IMV-FULL        & 13.63$\pm$0.10(2.48)            & 9.01$\pm$0.18(3.45)             & 10.57$\pm$0.12(2.22)            & 16.65$\pm$0.13(3.96)            & \multicolumn{2}{c}{2.50$\pm$0.00(5.59)}          \\
IMV-TENSOR      & 13.88$\pm$0.18(2.69)            & 9.26$\pm$0.23(3.64)             & 10.80$\pm$0.17(2.45)            & 16.91$\pm$0.23(4.10)            & \multicolumn{2}{c}{3.44$\pm$0.54(6.27)}          \\
ETN-ODE         & 15.38$\pm$0.23(3.35)$\ddagger$  & 10.19$\pm$0.29(4.18)$\ast$      & 11.91$\pm$0.21(3.04)$\ddagger$  & 19.15$\pm$0.33(5.14)$\ddagger$  & \multicolumn{2}{c}{3.75$\pm$0.00(6.50)}          \\
ATT-T-LSTM      & 14.03$\pm$0.15(3.03)            & 9.15$\pm$0.15(3.80)             & 10.82$\pm$0.14(2.77)            & 16.94$\pm$0.22(4.29)            & \multicolumn{2}{c}{2.50$\pm$0.00(5.59)}          \\
ATT-F-LSTM      & 14.31$\pm$0.21(2.97)            & 9.35$\pm$0.23(3.78)             & 10.95$\pm$0.14(2.62)            & 17.30$\pm$0.28(4.17)            & \multicolumn{2}{c}{1.56$\pm$0.54(4.51)}          \\ \hline
N-BEATS          & 14.60$\pm$0.27(3.01)            & 9.53$\pm$0.31(3.43)             & 11.36$\pm$0.26(2.61)            & 17.59$\pm$0.40(4.54)            & \multicolumn{2}{c}{2.81$\pm$0.54(5.82)}          \\
NHiTS           & 14.86$\pm$0.53(3.23)            & 9.56$\pm$0.85(3.31)             & 11.41$\pm$0.55(2.68)            & 18.00$\pm$1.00(4.76)            & \multicolumn{2}{c}{3.44$\pm$0.54(6.27)}          \\ \hline
GAT+GRU (L=1)   & 13.80$\pm$0.12(2.82)            & 8.93$\pm$0.07(3.70)             & 10.53$\pm$0.06(2.51)            & 16.69$\pm$0.14(4.49)            & \multicolumn{2}{c}{1.25$\pm$0.00(4.15)}          \\
GAT+GRU (L=2)   & 14.01$\pm$0.34(2.85)            & 9.11$\pm$0.27(3.75)             & 10.66$\pm$0.22(2.47)            & 17.07$\pm$0.47(4.56)            & \multicolumn{2}{c}{\textbf{0.62$\pm$0.62(2.07)}} \\
GATv2+GRU (L=1) & \textbf{13.62$\pm$0.22(2.78)}   & \textbf{8.74$\pm$0.33(3.54)}    & \textbf{10.38$\pm$0.21(2.43)}   & \textbf{16.44$\pm$0.37(4.48)}   & \multicolumn{2}{c}{1.88$\pm$0.62(4.87)}          \\
GATv2+GRU (L=2) & 13.98$\pm$0.34(2.95)            & 8.97$\pm$0.35(3.72)             & 10.60$\pm$0.26(2.54)            & 16.90$\pm$0.50(4.68)            & \multicolumn{2}{c}{1.25$\pm$0.00(4.15)}          \\ \hline

\end{tabular}
               
                		}
    \begin{tablenotes}

    \item {\fontsize{5.5}{1.5}\selectfont $\ast p\leq 0.05 $;\ $\dagger p\leq 0.01 $;\ $\ddagger p\leq 0.005 $;\par}
    \item {\fontsize{5.5}{1.5}\selectfont Total historical timetamps is $T=16$; Prediction horizon is $H=2$; BG levels are sampled every $\delta t = 15\ {\rm min}$;\par}
    \item {\fontsize{5.5}{1.5}\selectfont RMSE: root mean square error; MAPE: mean absolute percentage error;  \par}
    \item {\fontsize{5.5}{1.5}\selectfont MAE: mean absolute error; gRMSE: glucose-specific RMSE;  \par}
    \item {\fontsize{5.5}{1.5}\selectfont The result is formatted as ``$mean\pm sd_1(sd_2)$''; \par}
    \item {\fontsize{5.5}{1.5}\selectfont $sd_1$ is the standard deviation after running the experiments four times by changing random seed; \par}
    \item {\fontsize{5.5}{1.5}\selectfont $sd_2$ is the standard deviation of the metric results across the participants. \par}
    \item {\fontsize{5.5}{1.5}\selectfont {\color{black}Explanation of methods can refer to section \ref{sec:baseline}.}
        
    }
    
    \end{tablenotes}
      \end{threeparttable}
\end{table}

\begin{table}[tb]
		\renewcommand{\arraystretch}{1.3}
        \caption{Prediction of blood glucose (BG) levels in ArisesT1DM.}
	\label{tab:arisest1dm}
		\centering
        \begin{threeparttable}
  
		\resizebox{0.5\textwidth}{!}{
\begin{tabular}{ccccccl}
\hline
Methods          & RMSE (mg/dL)                   & MAPE (\%)                      & MAE (mg/dL)                    & gRMSE (mg/dL)                   & \multicolumn{2}{c}{Time lag (min)}                 \\ \hline
LR              & 25.17$\pm$0.00(7.54)$\ddagger$ & 12.14$\pm$0.00(3.45)$\ddagger$ & 18.20$\pm$0.00(5.01)$\ddagger$ & 31.93$\pm$0.00(10.19)$\ddagger$ & \multicolumn{2}{c}{11.67$\pm$0.00(6.32)$\ast$}     \\
XGBoost         & 24.84$\pm$0.02(5.63)$\ddagger$ & 12.07$\pm$0.02(3.18)$\ddagger$ & 17.99$\pm$0.02(3.85)$\ddagger$ & 32.32$\pm$0.03(7.76)$\ddagger$  & \multicolumn{2}{c}{13.37$\pm$0.32(7.24)$\ddagger$} \\ \hline
RETAIN          & 21.46$\pm$0.11(4.35)$\ddagger$ & 10.41$\pm$0.05(2.55)$\ddagger$ & 15.49$\pm$0.07(2.97)$\ddagger$ & 27.27$\pm$0.17(5.98)$\ddagger$  & \multicolumn{2}{c}{11.01$\pm$0.13(6.37)$\ast$}     \\
IMV-FULL        & 24.44$\pm$0.34(5.58)$\ddagger$ & 11.93$\pm$0.16(2.83)$\ddagger$ & 17.89$\pm$0.28(3.89)$\ddagger$ & 30.89$\pm$0.46(7.37)$\ddagger$  & \multicolumn{2}{c}{11.26$\pm$0.38(6.68)}           \\
IMV-TENSOR      & 21.48$\pm$0.23(4.57)$\ddagger$ & 10.31$\pm$0.04(2.52)$\ddagger$ & 15.35$\pm$0.09(2.95)$\ddagger$ & 27.41$\pm$0.33(6.27)$\ddagger$  & \multicolumn{2}{c}{10.78$\pm$0.09(6.35)}           \\
ETN-ODE         & 23.18$\pm$0.26(5.33)$\ddagger$ & 10.99$\pm$0.15(2.77)$\ddagger$ & 16.38$\pm$0.23(3.35)$\ddagger$ & 29.75$\pm$0.35(7.35)$\ddagger$  & \multicolumn{2}{c}{12.20$\pm$0.66(6.74)$\dagger$}  \\
ATT-T-LSTM      & 25.60$\pm$0.64(6.04)$\ddagger$ & 12.29$\pm$0.26(2.85)$\ddagger$ & 18.56$\pm$0.44(4.02)$\ddagger$ & 32.68$\pm$0.82(8.31)$\ddagger$  & \multicolumn{2}{c}{13.08$\pm$0.48(6.49)$\ddagger$} \\
ATT-F-LSTM      & 21.85$\pm$0.18(5.06)$\ddagger$ & 10.49$\pm$0.08(2.55)$\ddagger$ & 15.72$\pm$0.12(3.27)$\ddagger$ & 27.75$\pm$0.26(6.94)$\ddagger$  & \multicolumn{2}{c}{10.62$\pm$0.18(6.18)}           \\ \hline
N-BEATS          & 21.76$\pm$0.03(4.48)$\ddagger$ & 10.52$\pm$0.02(2.57)$\ddagger$ & 15.64$\pm$0.02(3.03)$\ddagger$ & 27.54$\pm$0.04(6.11)$\ddagger$  & \multicolumn{2}{c}{11.28$\pm$0.21(6.25)$\ast$}     \\
NHiTS           & 21.85$\pm$0.04(4.54)$\ddagger$ & 10.55$\pm$0.02(2.61)$\ddagger$ & 15.66$\pm$0.03(3.07)$\ddagger$ & 27.61$\pm$0.05(6.17)$\ddagger$  & \multicolumn{2}{c}{11.41$\pm$0.25(6.25)$\ast$}     \\ \hline
GAT+GRU (L=1)   & 20.02$\pm$0.12(3.94)           & 9.70$\pm$0.05(2.30)            & 14.50$\pm$0.07(2.70)           & 25.18$\pm$0.15(5.35)            & \multicolumn{2}{c}{9.57$\pm$0.45(5.22)}            \\
GAT+GRU (L=2)   & 20.00$\pm$0.11(3.91)           & \textbf{9.66$\pm$0.05(2.29)}            & 14.48$\pm$0.07(2.67)           & 25.15$\pm$0.14(5.28)            & \multicolumn{2}{c}{9.38$\pm$0.48(5.61)}            \\
GATv2+GRU (L=1) & \textbf{19.97$\pm$0.07(3.93)}  & 9.68$\pm$0.05(2.26)   & 14.47$\pm$0.05(2.69)  & \textbf{25.11$\pm$0.13(5.31)}   & \multicolumn{2}{c}{\textbf{9.53$\pm$0.28(5.26)}}   \\
GATv2+GRU (L=2) & 20.02$\pm$0.11(3.81)           & 9.67$\pm$0.05(2.28)            & \textbf{14.46$\pm$0.08(2.61)}           & 25.20$\pm$0.13(5.18)            & \multicolumn{2}{c}{9.81$\pm$0.08(5.95)}            \\ \hline
\end{tabular}
		}
     \begin{tablenotes}
    \item {\fontsize{5.5}{1.5}\selectfont $\ast p\leq 0.05 $;\ $\dagger p\leq 0.01 $;\ $\ddagger p\leq 0.005 $;\par}
    \item {\fontsize{5.5}{1.5}\selectfont Total historical timetamps is $T=48$; Prediction horizon is $H=6$; BG levels are sampled every $\delta t = 5\ {\rm min}$;\par}
    \item {\fontsize{5.5}{1.5}\selectfont RMSE: root mean square error; MAPE: mean absolute percentage error;  \par}
    \item {\fontsize{5.5}{1.5}\selectfont MAE: mean absolute error; gRMSE: glucose-specific RMSE;  \par}
    \item {\fontsize{5.5}{1.5}\selectfont The result is formatted as ``$mean\pm sd_1(sd_2)$''; \par}
    \item {\fontsize{5.5}{1.5}\selectfont $sd_1$ is the standard deviation after running the experiments four times by changing random seed; \par}
    \item {\fontsize{5.5}{1.5}\selectfont $sd_2$ is the standard deviation of the metric results across the participants. \par}
        \item {\fontsize{5.5}{1.5}\selectfont {\color{black}Explanation of methods can refer to section \ref{sec:baseline}.}
        
    }
        \end{tablenotes}
        \end{threeparttable}
\end{table}
\begin{table}[tb]
		\renewcommand{\arraystretch}{1.3}
         \caption{Prediction of blood glucose (BG) levels in ShanghaiT2DM.}
	\label{tab:shanghait2dm}
		\centering
  \begin{threeparttable}
		\resizebox{0.5\textwidth}{!}{
\begin{tabular}{ccccccl}
\hline
Methods          & RMSE (mg/dL)                    & MAPE (\%)                      & MAE (mg/dL)                    & gRMSE (mg/dL)                   & \multicolumn{2}{c}{Time lag (min)}               \\ \hline
LR              & 17.10$\pm$0.00(13.04)$\ddagger$ & 9.34$\pm$0.00(4.27)$\ddagger$  & 12.03$\pm$0.00(7.64)$\ddagger$ & 19.62$\pm$0.00(16.17)$\ddagger$ & \multicolumn{2}{c}{1.50$\pm$0.00(4.50)}          \\
XGBoost         & 16.75$\pm$0.02(5.64)$\ddagger$  & 11.09$\pm$0.04(7.58)$\ddagger$ & 12.79$\pm$0.03(5.22)$\ddagger$ & 20.12$\pm$0.04(7.41)$\ddagger$  & \multicolumn{2}{c}{1.09$\pm$0.06(4.43)}          \\ \hline
RETAIN          & 14.82$\pm$0.42(13.45)$\ddagger$ & 7.79$\pm$0.05(3.52)$\ddagger$  & 9.77$\pm$0.11(4.44)$\ddagger$  & 17.10$\pm$0.42(15.29)$\ddagger$ & \multicolumn{2}{c}{\textbf{0.53$\pm$0.48(8.11)}} \\
IMV-FULL        & 11.84$\pm$0.05(3.04)            & 6.97$\pm$0.07(2.32)            & 8.66$\pm$0.06(2.32)            & 13.67$\pm$0.07(3.91)            & \multicolumn{2}{c}{0.82$\pm$0.13(3.41)}          \\
IMV-TENSOR      & 12.14$\pm$0.13(3.15)$\ddagger$  & 7.29$\pm$0.12(2.79)$\ddagger$  & 8.96$\pm$0.08(2.48)$\ddagger$  & 14.11$\pm$0.12(4.08)$\ddagger$  & \multicolumn{2}{c}{1.50$\pm$0.00(4.50)$\ast$}    \\
ETN-ODE         & 12.90$\pm$0.30(3.58)$\ddagger$  & 7.66$\pm$0.17(3.01)$\ddagger$  & 9.44$\pm$0.20(2.71)$\ddagger$  & 15.03$\pm$0.34(4.63)$\ddagger$  & \multicolumn{2}{c}{1.05$\pm$0.18(3.81)}          \\
ATT-T-LSTM      & 13.92$\pm$0.58(11.99)$\ddagger$ & 7.41$\pm$0.04(2.73)$\ddagger$  & 9.34$\pm$0.11(4.09)$\ddagger$  & 15.91$\pm$0.61(13.07)$\ddagger$ & \multicolumn{2}{c}{0.90$\pm$0.00(3.85)}          \\
ATT-F-LSTM      & 12.76$\pm$0.56(5.53)$\ddagger$  & 7.29$\pm$0.01(2.59)$\ddagger$  & 9.08$\pm$0.08(2.84)$\ddagger$  & 14.72$\pm$0.53(6.37)$\ddagger$  & \multicolumn{2}{c}{1.05$\pm$0.00(3.83)}          \\ \hline
N-BEATS         & 12.15$\pm$0.03(3.19)$\ddagger$  & 7.12$\pm$0.05(2.09)$\ddagger$  & 8.90$\pm$0.03(2.44)$\ddagger$  & 14.08$\pm$0.05(4.06)$\ddagger$  & \multicolumn{2}{c}{1.35$\pm$0.00(4.29)}          \\
NHiTS           & 12.12$\pm$0.05(3.17)$\ddagger$  & 7.08$\pm$0.08(2.09)$\ddagger$  & 8.85$\pm$0.06(2.39)$\ddagger$  & 14.04$\pm$0.09(4.03)$\ddagger$  & \multicolumn{2}{c}{1.35$\pm$0.11(4.29)}          \\ \hline
GAT+GRU (L=1)   & 11.78$\pm$0.05(3.10)            & 6.97$\pm$0.05(2.36)$\ast$      & 8.63$\pm$0.05(2.37)$\ast$      & 13.62$\pm$0.06(3.96)            & \multicolumn{2}{c}{0.86$\pm$0.06(3.49)}          \\
GAT+GRU (L=2)   & 11.75$\pm$0.04(3.07)            & 6.98$\pm$0.04(2.47)            & 8.63$\pm$0.02(2.38)            & 13.60$\pm$0.06(3.93)            & \multicolumn{2}{c}{0.79$\pm$0.12(3.33)}          \\
GATv2+GRU (L=1) & \textbf{11.72$\pm$0.02(3.03)}   & \textbf{6.93$\pm$0.05(2.29)}   & \textbf{8.59$\pm$0.03(2.34)}   & \textbf{13.55$\pm$0.05(3.88)}   & \multicolumn{2}{c}{0.79$\pm$0.06(3.34)}          \\
GATv2+GRU (L=2) & 11.74$\pm$0.04(3.03)            & 7.00$\pm$0.05(2.60)            & 8.62$\pm$0.04(2.36)            & 13.58$\pm$0.04(3.87)            & \multicolumn{2}{c}{0.79$\pm$0.06(3.34)}          \\ \hline
\end{tabular}
		}

     \begin{tablenotes}

    \item {\fontsize{5.5}{1.5}\selectfont $\ast p\leq 0.05 $;\ $\dagger p\leq 0.01 $;\ $\ddagger p\leq 0.005 $;\par}
    \item {\fontsize{5.5}{1.5}\selectfont Total historical timetamps is $T=16$; Prediction horizon is $H=2$; BG levels are sampled every $\delta t = 15\ {\rm min}$;\par}
    \item {\fontsize{5.5}{1.5}\selectfont RMSE: root mean square error; MAPE: mean absolute percentage error;  \par}
    \item {\fontsize{5.5}{1.5}\selectfont MAE: mean absolute error; gRMSE: glucose-specific RMSE;  \par}
    \item {\fontsize{5.5}{1.5}\selectfont The result is formatted as ``$mean\pm sd_1(sd_2)$''; \par}
    \item {\fontsize{5.5}{1.5}\selectfont $sd_1$ is the standard deviation after running the experiments four times by changing random seed; \par}
    \item {\fontsize{5.5}{1.5}\selectfont $sd_2$ is the standard deviation of the metric results across the participants. \par}
    \item {\fontsize{5.5}{1.5}\selectfont {\color{black}Explanation of methods can refer to section \ref{sec:baseline}.}
        
    }
    \end{tablenotes}
    \end{threeparttable}
\end{table}
The evaluation of prediction is shown in Table \ref{tab:ohiot1dm}-\ref{tab:shanghait2dm}.
%
We also leveraged t-test and Wilcoxon test to evaluate the significance between ``GATv2+GRU ($L=1$)'' and other methods, where $p\leq 0.05$ means statistically significant.
We have some observations based on these tables.

1) Firstly, our proposed method, ``GATv2+GRU ($L=1$)'', outperforms all the baselines, while LR and XGboost perform worst.
%
%

2) Deep methods perform better than non-deep methods. Non-interpretable methods cannot promise better predicting performance compared with intepretable methods.

3)
Compared with ``GAT+GRU'', the dynamic scoring in ``GATv2+GRU'' can slightly improve the prediction performance in this scenario.
Hence, the modeling via dynamic scoring can bring limited advantages to BGLP.

\subsection{Interpretation of variable importance}
\begin{figure*}
	\centering
	\includegraphics[width = 2\columnwidth]{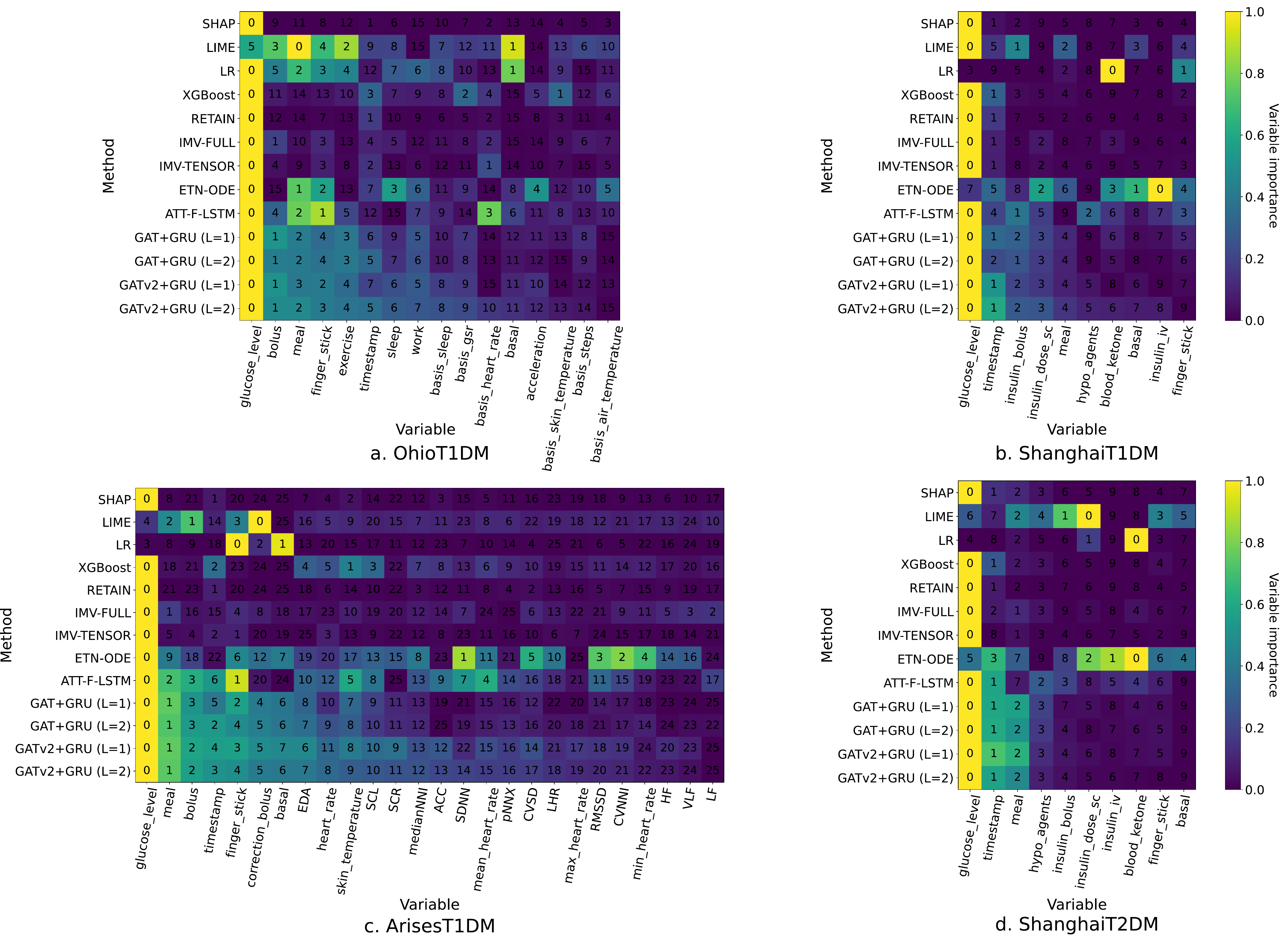}
 	\caption{The variable importance $v^j(\mathcal{I})$, scaled to $[0,1]$, of different methods, where $j$ is a variable from the variable set $\mathcal{N}$, and $\mathcal{I}$ contains all training examples. The number of graph layers is $L$. The x-axis/y-axis is variable/method. Brighter cell means higher $v^j(\mathcal{I})$. The number in a cell is the ranking place of the variable ranked by $v^j(\mathcal{I})$. {\color{black}Explanation of methods can refer to section \ref{sec:baseline}.}
  EDA: electrodermal activity; SCL: skin conductance level; SCR: 
 skin conductance response; medianNNI: median value of NN intervals; ACC: average 3D acceleration; SDNN: standard deviation of normal to normal (NN) intervals; pNNX: percentage of successive NN intervals greater than 50 ms; CVSD: coefficient of variation of successive differences; LHR: low-/high-frequency power ratio; RMSSD: root mean square of successive differences between adjacent NNs; CVNNI: coefficient of variation of NN intervals; HF: high frequency of heart rate in frequency domain; VLF: very high frequency of heart rate in frequency domain; LF: low frequency of heart rate in frequency domain.}
	\label{fig:variable_importance}
\end{figure*}

\begin{figure*}
	\centering
	\includegraphics[width = 2\columnwidth]{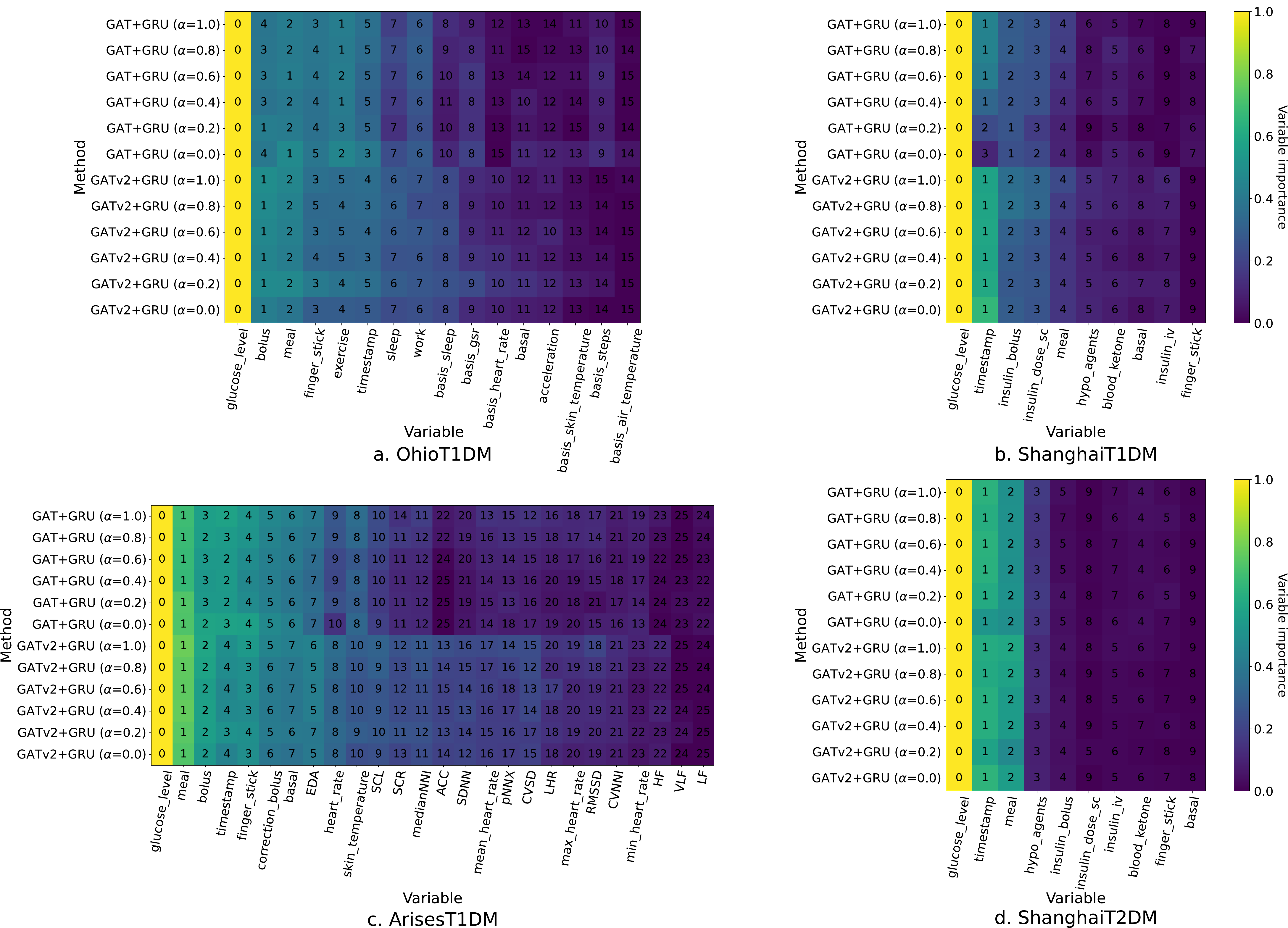}
  	\caption{The variable importance $v^j(\mathcal{I})$, scaled to $[0,1]$, of ``GAT+GRU'' and ``GATv2+GRU'' changes with different $\alpha$, where $\alpha$ is a hyperparameter of ${\rm LeakyReLU}(\cdot)$. Two graph layers are considered, and $j$ is a variable from the variable set $\mathcal{N}$, and $\mathcal{I}$ contains all training examples. The x-axis/y-axis is variable/method. Brighter cell means higher $v^j(\mathcal{I})$. The number in a cell is the ranking place of the variable ranked by $v^j(\mathcal{I})$. {\color{black}Explanation of methods can refer to section \ref{sec:baseline}.} EDA: electrodermal activity; SCL: skin conductance level; SCR: 
 skin conductance response; medianNNI: median value of NN intervals; ACC: average 3D acceleration; SDNN: standard deviation of normal to normal (NN) intervals; pNNX: percentage of successive NN intervals greater than 50 ms; CVSD: coefficient of variation of successive differences; LHR: low-/high-frequency power ratio; RMSSD: root mean square of successive differences between adjacent NNs; CVNNI: coefficient of variation of NN intervals; HF: high frequency of heart rate in frequency domain; VLF: very high frequency of heart rate in frequency domain; LF: low frequency of heart rate in frequency domain.}
	\label{fig:variable_importance_alpha}
\end{figure*}

After seeing the variable ranking lists w.r.t. $v^j(\mathcal{I})$ in four datasets in Figure \ref{fig:variable_importance}, we have the observations as follows:

1) The target variable, ``glucose\_level'', should be the most important in BGLP \citep{DBLP:conf/ecai/BevanC20, DBLP:journals/npjdm/ZhuULHOG22}. GARNNs, ATT-F-LSTM, IMV-LSTMs, RETAIN, XGBoost and SHAP can consistently focus on the target variable, while the rest methods often lose focus. 

%

2) GARNNs typically outperform baseline models by assigning significant importance to ``timestamp'' in the rankings, particularly for Shanghai T1DM and T2DM where the sample frequency of CGM is relatively low. This characteristic aligns with clinical observations that BG fluctuations are linked to personal lifestyles, exhibiting distinct temporal patterns.
It is significant that useful exogenous variables are highlighted when lacking enough values from the target variable in Shanghai datasets.

3)
Given that we predict the future BG of CGM instead of ``finger\_stick'', i.e., capillary BG test, more difference between them may reduce the importance of ``finger\_stick''.
%
%
%
The MAE between ``glucose\_level'' and ``finger\_stick'' in OhioT1DM and AriseT1DM is lower, i.e., 20.92 and 9.97 mg/dL, respectively.
The MAE between them in ShanghaiT1DM and ShanghaiT2DM are respectively 29.03 and 22.31 mg/dL.
Hence, GARNNs give special highlights to ``finger\_stick'' in OhioT1DM and AriseT1DM but less importance in ShanghaiT1DM and ShanghaiT2DM because of the larger MAE, while the variable importance ranking of other methods does not have similar observations.

4) Apart from ``glucose\_level'', ``finger\_stick'' and ``timestamp'', both bolus insulin (``bolus'', ``correction\_bolus'', ``insulin\_dose\_sc'' and ``insulin\_bolus'') and carbohydrate intake (``meal'') should be important variables as well, because they can rapidly affect the BG levels within short periods.
%
Basal insulin (``basal''), also called ``background insulin'', steadily controls BG levels for long periods.
Besides, ``insulin\_iv'' is an intravenous insulin infusion for super serious hyperglycemia under some extreme circumstances.

Therefore, in terms of daily cases, bolus insulin tends to be more important than basal insulin in BGLP.
GARNNs follow this knowledge in four datasets, while other methods fail to do that.
Besides, in ArisesT1DM, GARNNs give ``bolus'' more importance than ``correction\_bolus'', because the latter is an extra insulin taken during hyperglycemia.
The amount of the latter is much less than the former. 

On the other hand, a predominant reason for T2DM is that cells respond inactively to insulin.
Hence, compared with the variable ranking in ShanghaiT1DM, GARNNs reduce the importance of bolus insulin and give more importance to ``meal'' and non-insulin hypoglycemic agents, ``hypo\_agents'', in ShanghaiT2DM.

5) In OhioT1DM, the self-reported events, i.e., ``exercise'' and ``sleep'', should be more important than the sensor data, e.g., ``basis\_heart\_rate'', ``basis\_gsr'', etc.
This is because both exercising and sleeping can indirectly cause changes in BG levels.
Unlike the other methods, ``GATv2+GRU'' reflects this knowledge.
Besides, ``sleep'' and ``basis\_sleep'' are self-reported and sensor-detected, respectively, but ``basis\_sleep'' misses $56.89\%$ of the sleeping intensity data, so GARNNs give more importance to ``sleep''.

{\color{black}
6) In the analysis of the number of layers $L$ in GAT or GATv2, it is observed that this can slightly alter the ranking of variable importance $v^j(\mathcal{I})$. For instance, different configurations like ``GATv2+GRU($L=2$)'' show varying importance rankings for variables like ``sleep'' and ``work''. This was further substantiated by a feature ablation study in OhioT1DM \cite{DBLP:journals/corr/abs-2312-12541}, indicating that ``sleep'' improves prediction accuracy more than ``work''. The preference for ``sleep'' by ``GATv2+GRU($L=2$)'' seems more aligned with these findings, suggesting that multiple layers in GATv2 might capture more detailed interactions among variables. Similarly, there are subtle differences in rankings between GAT and GATv2, with GAT-based methods favoring ``work'' over ``sleep''. It is hypothesized that GAT's slightly lesser interpretability might be linked to its relatively weaker prediction performance, as indicated in tables \ref{tab:ohiot1dm}-\ref{tab:shanghait2dm}.

Despite these variations, both models consistently classify variables into categories like most important, very important, somewhat important, and less important.
For example, both models consistently identify ``glucose\_level'' as the most important variable across all datasets. This is followed by other closely related and significant variables that have a direct impact on BG levels, such as carbohydrate intake, classified as very important.
Subsequently, both models also pay attention to variables that indirectly affect BG levels, like exercise in the OhioT1DM dataset. These are considered somewhat important variables.
Finally, both GAT and GATv2 place other less significant variables at the lower end of the ranking.
}

7) Based on Theorem \ref{the:bounds}, the gaps between the prediction and the calculation of variable importance are affected by $\alpha$.
Figure \ref{fig:variable_importance_alpha} shows the variable importance ranking of GARNNs when changing $\alpha$.
We find that the gaps cannot significantly alter the variable importance, especially for ``GATv2+RNN''. Even with different gap sizes, the changes in variable importance remain small but still acceptable.

\subsection{Interpretation of feature maps}
\label{sec:featuremaps}

\begin{figure*}
	\centering
	\includegraphics[width = 2.0\columnwidth]{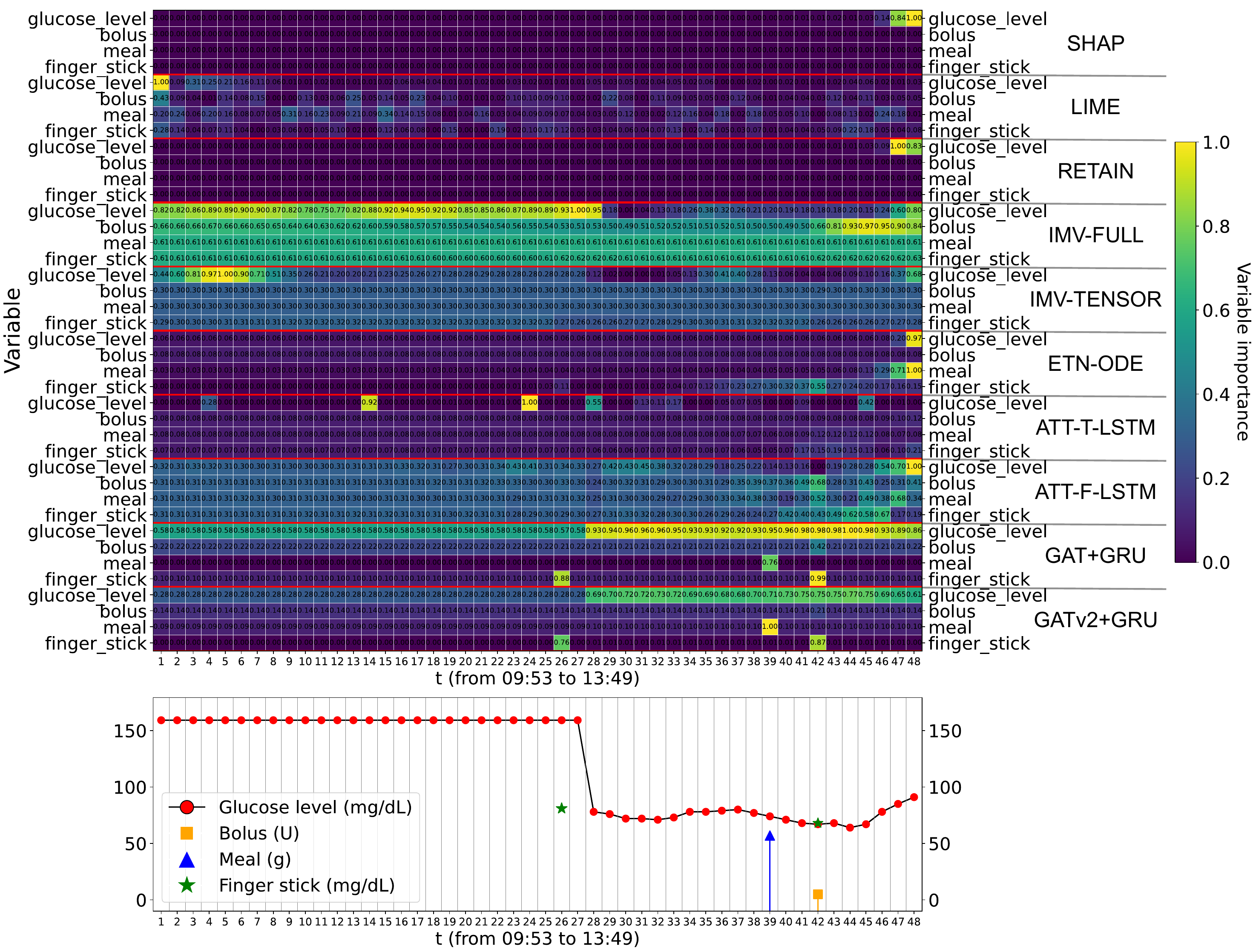}
	\caption{The bottom sub-figure is the visualization of a historical multi-variate time series of the patient 591 in OhioT1DM, only showing ``glucose\_level'', ``bolus'', ``meal'' and ``finger\_stick''. The heatmaps on the top are the feature maps of interpretable baseline methods and our proposed methods (``GAT+GRU'' and ``GATv2+GRU'' with two graph layers). The x-axis/y-axis is the timestep $t$ or variable name. The value in the cell is the variable importance $v_t^j$, scaled to $[0,1]$. {\color{black}Explanation of methods can refer to section \ref{sec:baseline}.}}
	\label{fig:feature_map_ohio}
\end{figure*}
\begin{figure*}[tb]
	\centering
	\includegraphics[width = 2\columnwidth]{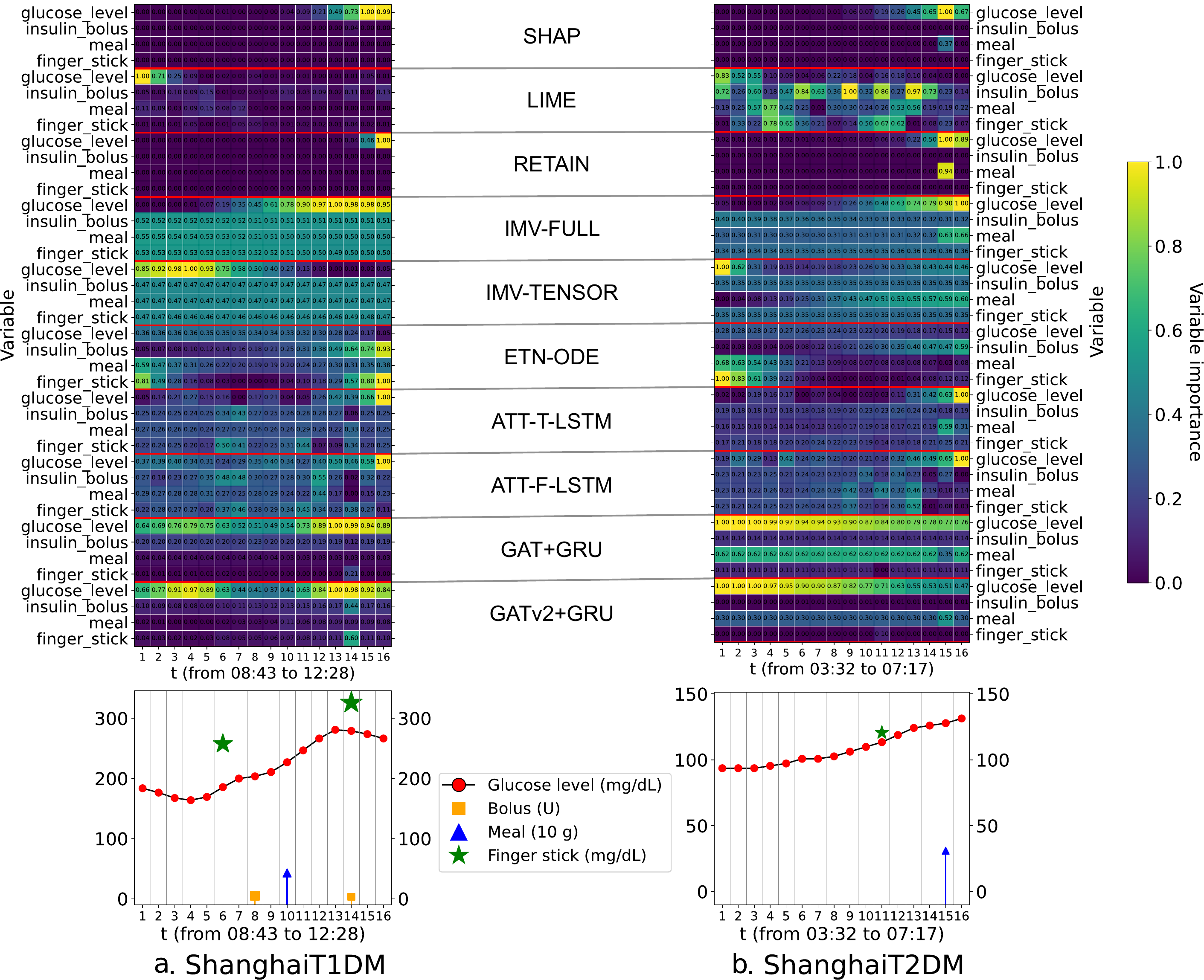}
	\caption{The bottom sub-figure is the visualization of a historical multi-variate time series of the patient 1001/2030 in ShanghaiT1DM/ShanghaiT2DM. The heatmaps on the top are the feature maps of interpretable baseline methods and our proposed methods (``GAT+GRU'' and ``GATv2+GRU'' with two graph layers). The x-axis/y-axis is the timestep $t$ or variable name. The value in the cell is the variable importance $v_t^j$, scaled to $[0,1]$. {\color{black} Explanation of methods can refer to section \ref{sec:baseline}.}}
	\label{fig:113_74}
\end{figure*}

CGM might be inaccurate after being worn for a period, so calibration by ``finger\_stick'' can make it back to normal.
According to the bottom sub-figure of Figure \ref{fig:feature_map_ohio}, participant 591 finds the CGM is unreliable, so this patient takes the first ``finger\_stick'' at $t=26$. Then, this patient calibrates  ``glucose\_level'' of CGM with ``finger\_stick'', and ``glucose\_level'' goes down and back to accurate measurements after $t=27$.
Next, this patient has a meal at $t=39$; this patient takes ``bolus'' and another ``finger\_stick'' at $t=42$.

Hence, ``glucose\_level'' ($t>27$) should be more important than ``glucose\_level'' ($t\leq27$).
Based on the feature maps of the methods, GARNNs accurately highlight the importance of ``glucose\_level'' ($t>27$).
Furthermore, GARNNs exactly capture the sparse signals, i.e., ``bolus'', ``meal'' and ``finger\_stick'', considering the related cells are markedly brighter when $t=26, 39 \ {\rm and}\ 42$.
Besides, GARNNs precisely focus on the lowest BG level, when $t=44$.
However, the feature maps of other methods are not quite informative. 
All of them cannot explicitly show the usage of sparse signals.
Meanwhile, they fail to focus ``glucose\_level'' time series reasonably. 

More feature maps are shown in Figure \ref{fig:113_74}.
Note that the ``meal'' in ShanghaiT1DM or ShanghaiT2DM means the amount of food intake rather than carbohydrate intake.
The variable importance is scaled to $[0, 1]$ as well.
The observations still hold that GARNNs ($L=2$) can correctly focus the important values of ``glucose\_level'', i.e., $t=4$ and $t=13$ in Figure \ref{fig:113_74}a, and $t=1$ in Figure \ref{fig:113_74}b.
Important sparse signals gain more attention as well, i.e.,  $t=14$ in Figure \ref{fig:113_74}a, and $t=15$ in Figure \ref{fig:113_74}b.

\section{Conclusion}
In this work, we propose GARNNs, novel interpretable models, by incorporating graph attention networks and RNNs for BGLP.
One notable advantage of GARNNs is their ability to provide significant variable importance for ranking variables and generating feature maps. 
In the experiments on four datasets, GARNNs demonstrate superior performance when compared with twelve baseline models. GARNNs outperform the baselines in both the accuracy of predictions and the quality of explanations regarding the contribution of variables.

{\color{black}
The main limitation of this work is its narrow focus, primarily concentrating on specific aspects of BG management without extensively exploring other relevant scenarios, such as the development of interpretable models for insulin advice. Consequently, future work will aim to expand the applicability of the proposed interpretable models across a broader range of BG management scenarios.
}

\section{Acknowledgement} 
We thank the support from UKRI Center for Doctoral Training in AI-enabled healthcare systems [EP/S021612/1] and University College London Overseas Research Scholarships.
{\color{black}
We also appreciate the support from Rosetrees Trust (Grant number: UCL-IHE-2020\textbackslash102) and Great Ormond Street Hospital (Charity ref.X12018).
The study sponsors had no study involvement.
}

{\color{black}

\bibliographystyle{elsarticle-num-names}
\bibliography{ref}


\end{document}